\documentclass{article}

\PassOptionsToPackage{numbers, compress}{natbib}

\usepackage[preprint]{neurips_2026}

\usepackage[utf8]{inputenc} 
\usepackage[T1]{fontenc}    
\usepackage{hyperref}       
\usepackage{url}            
\usepackage{booktabs}       
\usepackage{amsfonts}       
\usepackage{nicefrac}       
\usepackage{multirow}
\usepackage{microtype}      
\usepackage{xcolor}         
\usepackage{amsmath}
\usepackage{amsthm}
\usepackage{mathtools}
\usepackage{thmtools} 
\usepackage[linesnumbered,ruled,vlined]{algorithm2e}
\usepackage{cleveref}
\usepackage{graphicx}
\usepackage[most]{tcolorbox}

\newcommand{\std}[1]{\ensuremath{\text{\scriptsize{$\pm #1$}}}}
\definecolor{neighbor}{HTML}{009988}
\definecolor{entropy}{HTML}{EE7733}

\newtheorem{theorem}{Theorem}

\title{Context-weighted Discrete Flow Matching}

\author{
  Daniil Cherniavskii\thanks{Work done during an internship at Meta FAIR.} \\
  University of Amsterdam \\
  \texttt{d.cherniavskii@uva.nl}
  \And
  Daniel Severo \\
  Meta FAIR \\
  \texttt{dsevero@meta.com}
  \And
  Karen Ullrich \\
  Meta FAIR \\
  \texttt{karenu@meta.com}
}

\begin{document}

\maketitle

\begin{abstract}
  Discrete flow matching provides a flexible framework for generative modeling on discrete structures \citep{gat2024discrete}. However, the standard factorized training objective exposes the model to targets of varying difficulty, mixing well-conditioned, predictable tokens with ambiguous, high-entropy ones. We empirically demonstrate that the uncertainty over the value of each token is closely related to the density of available context in its neighborhood. Motivated by this observation, we propose a simple modification to the underlying Continuous-Time Markov Chain (CTMC) that incorporates local context information. Our context-weighted sampler improves generation quality with negligible computational overhead, while our scaled cross-entropy loss function reweights the training signal from different tokens and reduces generative perplexity by up to 63\% on OpenWebText \citep{Gokaslan2019OpenWeb}. Moreover, our approach matches a strong semi-autoregressive block diffusion baseline \citep{arriola2025block} in quality while retaining the ability for any-order generation. These results highlight the role of local context as an important factor in discrete generative modeling and show that simple context-aware modifications can significantly improve both sampling and training efficiency.
\end{abstract}

\section{Introduction}

\begin{figure}[ht]
    \centering
    \includegraphics[width=\linewidth]{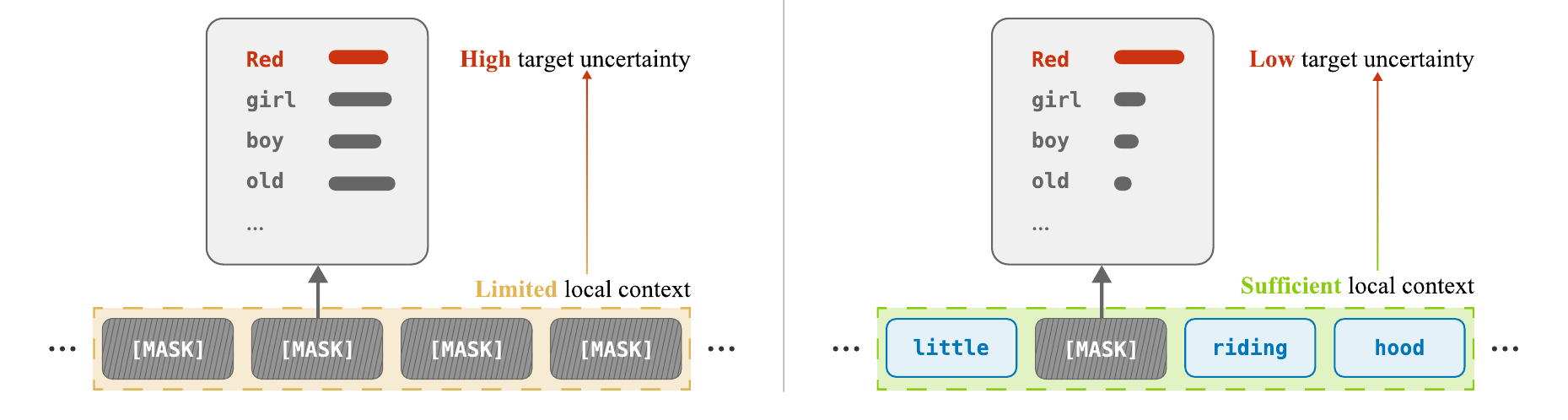}
    \caption{A motivational example. The target word ``Red'' is much more predictable in the second sentence because of the available local context.}
    \label{fig:example}
\end{figure}

Generative modeling over discrete data has recently expanded beyond autoregressive approaches to include diffusion and flow matching methods \citep{gat2024discrete, campbell2022continuous}. These methods have been successfully applied to image generation with discrete latents \citep{gu2022vector}, molecular design \citep{vignacdigress,kelviniuswyckoffdiff}, and text generation \citep{sahoo2024simple}. Among them, Discrete Flow Matching (DFM) is a particularly general instance of this family: it learns generative dynamics via continuous-time Markov processes \citep{lipman2024flowmatchingguidecode,gat2024discrete}, enabling parallel and any-order generation through learned probability paths.

In DFM and discrete diffusion, the probability path and factorized training objective decompose generation into coordinate-wise prediction problems. Unlike autoregressive models, which condition on a fixed prefix, these methods effectively marginalize over all possible token order permutations, requiring the model to predict tokens under a much broader set of partial contexts. As a result, tokens have widely varying amounts of conditioning information, leading to substantial variation in prediction difficulty \citep{kim2025train}. Some tokens are strongly constrained by context and easy to predict, while others remain ambiguous (see \cref{fig:example}). Existing approaches mitigate this imbalance by enforcing structured generation orders \citep{besnierhalton,arriola2025block,gat2025set} or adapting the sampler at inference time \citep{ben2025accelerated}, but they do not incorporate uncertainty directly into the generative dynamics.

In this work, we empirically observe that prediction difficulty is closely tied to available local context, measured by the number of unmasked tokens in each target token's neighborhood. Motivated by this, we modify the CTMC update probabilities to depend on local context while preserving the per-token marginal probability of being unmasked along the probability path. This keeps the same noise and data endpoints, but shifts intermediate updates toward better-conditioned tokens. The resulting formulation provides both an inference-time  sampler that requires no fine-tuning, and a scaled cross-entropy objective that reweights the training signal. Context-weighted sampling improves generation quality across domains: on OpenWebText \citep{Gokaslan2019OpenWeb}, it improves MAUVE by up to 24\%, while for molecule generation \citep{ramakrishnan2014quantum}, it increases the number of valid and novel samples by up to $2.8\times$ and $1.9\times$, respectively. Our loss objective reduces generative perplexity on OpenWebText by up to 63\%, and matches a strong block diffusion baseline while retaining any-order generation.

Our contributions are:
\begin{itemize}
    \item  We empirically show that prediction entropy is closely linked to available local context, with denser unmasked neighborhoods leading to lower uncertainty.
    \item  We modify the CTMC transition dynamics to depend on local context while preserving the per-token marginal unmasking probability.
    \item We derive two practical context-aware mechanisms: a sampler applicable to pre-trained models without fine-tuning, and a scaled cross-entropy objective that reweights the training signal from different tokens, resulting in significant gains in generation quality.
\end{itemize}

 The remainder of the paper is organized as follows: \cref{sec:background} reviews Discrete Flow Matching and probability path design, \cref{sec:method} presents our empirical motivation and context-weighted formulation, \cref{sec:experiments} evaluates the proposed sampler and training objective on text and molecular generation tasks, and \cref{sec:conclusion} summarizes the work and discusses limitations and future directions.
 
\section{Background}
\label{sec:background}

\subsection{Discrete Flow Matching}
\label{sec:dfm}

We consider finite discrete sequences $x\in\mathcal D^N$, where $\mathcal D$ is a finite vocabulary, and $N$ is the sequence length. Discrete Flow Matching (DFM)\citep{gat2024discrete} typically defines a conditional probability path $p_t(x\mid x_0,x_1)$ between a source noise sample $x_0\sim p_0$ and a data sample $x_1\sim p_{\mathrm{data}}$. In this work, we consider two common source distributions: a fully masked source, where all coordinates are initialized to a special mask token, and a uniform source, where coordinates are initialized from the vocabulary uniformly at random. For both sources, we say that coordinate \(i\) is unmasked whenever \(x^i=x_1^i\), and masked otherwise. 

The path satisfies the endpoint conditions
\[
p_0(x\mid x_0,x_1)=\delta(x_0, x),
\qquad
p_1(x\mid x_0,x_1)=\delta(x_1, x).
\]
where $\delta(z, x)$ denotes the Kronecker delta, equal to $1$ when $x=z$ and $0$ otherwise. This path is realized as a Continuous-Time Markov Chain (CTMC). For a small step $h$, its transition kernel can be written as
\[
\mathbb P(X_{t+h}=y\mid X_t=x)
=
\delta(x, y)+h\,u_t(y,x\mid x_0,x_1)+o(h),
\]
where $u_t(y,x\mid x_0,x_1)$ is the probability velocity, or transition rate, from state $x$ to state $y$, and $o(h)$ denotes a remainder term $r(h)$ satisfying $r(h)/h \to 0$ as $h\to 0$. DFM uses factorized velocities, where transitions change one coordinate at a time:
\[
u_t(y,x\mid x_0,x_1)
=
\sum_{i=1}^N
\delta(y^{\overline i},x^{\overline i})
u_t^i(y^i,x\mid x_0,x_1).
\]
where $i\in\{1,\ldots,N\}$ indexes a token position in the sequence, and $x^{\overline{i}}$ denotes $(x^1, \dots, x^{i-1},x^{i+1}, \dots, x^N)$, i.e. all coordinates of $x$ except $i$. This factorization enables efficient coordinate-wise simulation, although the path distribution $p_t(x\mid x_0,x_1)$ itself may not factorize across coordinates. The probability path and the velocity are linked by the Kolmogorov forward equation: choosing a path determines a family of corresponding velocities, and simulating the CTMC with this velocity recovers the path marginals. 

A common choice is the convex mixture path \citep{lipman2024flowmatchingguidecode,gat2024discrete}, defined independently for each coordinate as
\[
p_t(x^i\mid x_0,x_1)
=
(1-\kappa_t)\delta(x^i,x_0^i)
+
\kappa_t\delta(x^i,x_1^i),
\]
where $\kappa_t \in [0,1]$ is a scalar noise scheduler that increases with time. Its corresponding coordinate-wise velocity is
\[
u_t^i(y^i,x\mid x_0,x_1)
=
\frac{\dot\kappa_t}{1-\kappa_t}
\left[
\delta(y^i,x_1^i)-\delta(y^i,x^i)
\right].
\]
Thus each coordinate moves from its source value toward its data value at the same global rate, independent of how much context is available around that coordinate. A natural extension is to let the per-coordinate update rate depend on the current state $x_t$, so that coordinates can be updated differently while retaining coordinate-wise simulation.

The model is trained to predict the data endpoint from a partially noised state $X_t\sim p_t(\cdot\mid x_0,x_1)$ using the factorized Conditional Matching objective \citep{gat2024discrete}
\[
\mathcal L_{\mathrm{CM}}(\theta)
=
-\mathbb E_{t,X_0,X_1,X_t}
\sum_{i=1}^N
\log p_{1\mid t}^{\theta,i}(X_1^i\mid X_t).
\]
Thus training decomposes into coordinate-wise prediction problems, where each token is predicted from the same partially noised context but all coordinates are weighted uniformly.

\subsection{Probability Path Design}

The choice of probability path determines the family of conditional prediction problems encountered during training and sampling. In masked diffusion, different token orderings induce conditionals of widely varying difficulty, and learning all such factorizations can be substantially harder than following a favorable one \citep{kim2025train}. When the data admit an intrinsic structure, aligned orderings yield low-entropy, well-conditioned predictions, whereas unfavorable ones lead to ambiguous, high-entropy targets. Existing approaches address this by restricting the set of factorizations: for example, Halton schedules impose spatially uniform token selection in MaskGIT, while neighboring autoregressive decoding exploits local spatial dependencies in vision models \citep{besnierhalton,he2025neighboring}. Semi-autoregressive and block-wise decoding strategies similarly constrain the generation process by revealing continuous groups of tokens rather than arbitrary subsets \citep{arriola2025block,gat2025set,song2025seed,nie2025large,zhu2025llada,bie2025llada20scalingdiffusionlanguage,bie2026llada21speedingtextdiffusion,li2025autoregressive}. While these methods improve the induced conditionals, they hard-code an ordering, which may be suboptimal for individual samples and reduces the flexibility of order-agnostic generation. Test-time ordering methods instead adapt the generation order using confidence, margin, or entropy-based criteria \citep{zheng2024a,kim2025train,ben2025accelerated}, but rely on the model’s own uncertainty estimates and leave the training objective and probability path unchanged. General path formulations and kinetic-optimal path design offer a more direct way to modify discrete probability paths \citep{shaulflow}, but primarily target global path geometry rather than connecting locality of context to token-level prediction uncertainty. This leaves open how to incorporate such uncertainty into the probability path in a way that can inform both training and inference.


\section{Context-weighted Discrete Flow Matching}
\label{sec:method}

\subsection{Local Context and Token Uncertainty}
\label{sec:local_context}

\begin{figure}[ht]
    \centering
    \includegraphics[width=\linewidth]{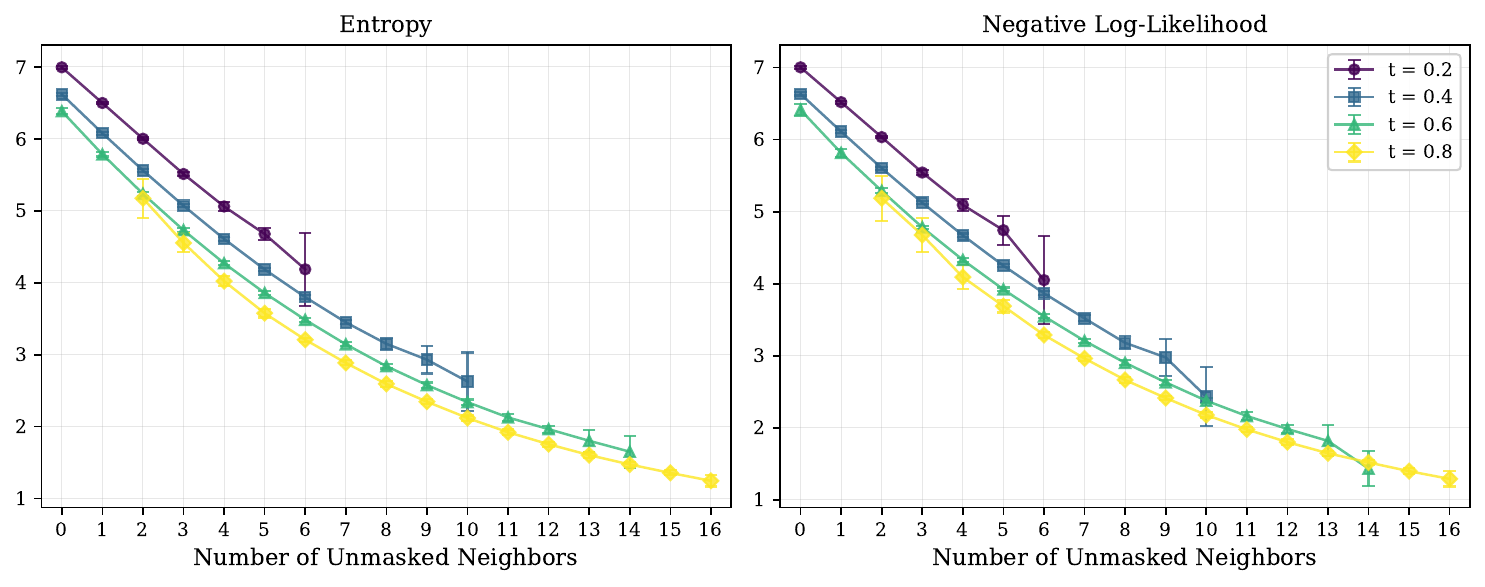}
    \caption{The average entropy (left) and negative log-likelihood (right) of the masked token vs. the number of its unmasked neighbors in the vicinity of size $2r = 16$. Both metrics decrease monotonically as the amount of available local context increases.}
    \label{fig:entropy_nll_boxplot}
\end{figure}

Following the view that probability path design induces the conditional prediction problems, we now ask what makes some tokens easier to predict than others. We empirically find that \textbf{local context provides a simple proxy for prediction difficulty} in Discrete Flow Matching: the number of unmasked neighbors around a token is strongly correlated with both the entropy of the model prediction and the negative log-likelihood of the true token.

Using a pre-trained DFM model on OWT~\citep{Gokaslan2019OpenWeb}, we sample partially masked states $X_t \sim p_t(\cdot \mid x_0, x_1)$ and for each masked token measure the prediction entropy and the negative log-likelihood as a function of the number of unmasked neighbors within a fixed local window. As shown in \cref{fig:entropy_nll_boxplot}, both quantities decrease consistently as more local context becomes available, indicating that nearby observed tokens make the prediction both less uncertain and more accurate (see more details in Appendix \cref{sec:add_results}).


This variation matters because the standard CM objective weights all coordinate-wise targets uniformly, even though their reducible learning signal may differ. For a fixed partially noised state $X_t$, with $X_1^i \sim p_{1\mid t}^i(\cdot\mid X_t)$, the expected loss of the model predictor $p_{1\mid t}^{\theta,i}$ decomposes as

\[
\mathbb{E}_{X_1^i\sim p_{1\mid t}^i(\cdot\mid X_t)}
\left[
-\log p_{1\mid t}^{\theta,i}(X_1^i\mid X_t)
\right]
=
\underbrace{
H\!\left(p_{1\mid t}^i(\cdot\mid X_t)\right)
}_{\text{irreducible uncertainty}}
+
\underbrace{
\mathrm{KL}\!\left(
p_{1\mid t}^i(\cdot\mid X_t)
\,\|\, 
p_{1\mid t}^{\theta,i}(\cdot\mid X_t)
\right)
}_{\text{model error}}.
\]
Thus the per-token loss contains both an irreducible uncertainty term and a reducible model error term. For high-entropy coordinates, a large part of the loss comes from intrinsic ambiguity in the conditional distribution, so a single sampled target gives a less informative training signal. Low-entropy coordinates, in contrast, correspond to more determined targets and provide sharper supervision. Ideally, a weighting scheme should emphasize the reducible component of the loss rather than uniformly weighting raw per-token losses, which can be dominated by irreducible uncertainty. Since the true entropy is unavailable during training, we use local context as an inexpensive proxy for this token-level difficulty.

\subsection{Context-weighted CTMC}

In this section, we propose the context-aware modification to the standard CTMC process \citep{gat2024discrete}. In order     to keep coordinate-wise simulation tractable, we will employ only factorized velocities. Consider a context weight function $\alpha \colon \mathcal{D}^N \to \mathbb{R}^N_+$. To integrate it into the CTMC, we define a new factorized velocity $v_t$ as:

$$
v_t^i(x^i, z \mid x_0, x_1) \coloneqq \alpha^i(z) \cdot u_t^i(x^i, z \mid x_0, x_1)
$$

Note that it is a valid probability velocity, since it satisfies the rate conditions:

$$
\sum_{x^i \in [d]} v_t^i(x^i, z)  =  \sum_{x^i \in [d]} \alpha^i(z) u_t^i(x^i, z) =  \alpha^i(z) \sum_{x^i \in [d]} u_t^i(x^i, z) = 0
$$

and $v_t^i(x^i, z) \geq 0, \forall x^i \neq z^i$ . We emphasize that $\alpha$ depends only on the $z$ (current state) and controls which token positions will be changed, but not the actual value of the token, since it is independent of $x$. This new velocity $v_t(y, x)$ generates a new conditional probability path $\tilde{p}_t(x \mid x_0, x_1)$, defined as the solution to the corresponding Kolmogorov forward equation (see \cref{sec:dfm} for details). A necessary requirement for $\tilde{p}_t$ is to connect noise and data distributions, imposing boundary conditions $\tilde{p}_{t=0}(x \mid x_0, x_1) = \delta(x, x_0)$ and $\tilde{p}_{t=1}(x \mid x_0, x_1) = \delta(x, x_1)$. Satisfying them in the general case requires solving the aforementioned equation for $\tilde{p}_t$, which we find intractable. However, for the standard convex mixture path, we are able to derive the conditions on the context weight function $\alpha^i$  to satisfy the boundary conditions and preserve the marginal distribution. 

\begin{theorem}
\label{thrm:main}
Let $\tilde p_t(\cdot \mid x_0,x_1)$ denote the context-weighted discrete path associated with the standard convex probability path. If
$$
\sum_{i=1}^N \alpha^i(x_t)\,\delta(x_t^i,x_0^i)
\;=\;
\sum_{i=1}^N \delta(x_t^i,x_0^i),
$$
then $\tilde p_t$ satisfies the endpoint constraints
$$
\tilde p_{0}(x\mid x_0,x_1)=\delta(x,x_0),
\qquad
\tilde p_{1}(x\mid x_0,x_1)=\delta(x,x_1),
$$
and induces the same distribution over the number of updated tokens as the standard convex path.


If, moreover, $\alpha$ is equivariant under circular shifts, in the sense that
\[
\alpha^{i+k}(\operatorname{shift}_k(x)) = \alpha^i(x)
\qquad \forall i,k,
\]
where indices are taken modulo $N$, and depends only on the binary mask reveal, i.e. 
\[
\alpha^i(x) = \bar{\alpha}^i(b(x))
\] 
where \(\bar{\alpha} \colon \{0,1\}^N \to \mathbb{R}^{N}_{+}\) and \(b^j(x) = \mathbf{1}[x^j = x_1^j]\), then for every coordinate $i$ the marginal law matches that of the standard convex path:
\[
\mathbb{P}(X_t^i=x_1^i \mid x_0,x_1)=\kappa_t.
\]
\end{theorem}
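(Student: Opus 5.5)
The plan is to reduce the chain to its reveal process. Under the convex mixture velocity, coordinate $i$ jumps only from $x_0^i$ to $x_1^i$, and never moves again once it equals $x_1^i$. Multiplying by $\alpha^i(z)$ keeps this structure. So the context-weighted chain $\tilde X_t$ is determined by the binary mask process $B_t = b(\tilde X_t)\in\{0,1\}^N$. This is itself a CTMC: coordinate $i$ flips from $0$ to $1$ at rate $\alpha^i(\tilde X_t)\,\dot\kappa_t/(1-\kappa_t)$, and states with $B_t^i=1$ are absorbing in that coordinate. (If $x_0^i=x_1^i$, the coordinate is revealed from the start and only coordinates with $x_0^i\neq x_1^i$ matter. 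For the mask source this is automatic, and I would state this reduction explicitly.) The endpoint at $t=0$ is then immediate, since the chain starts at $x_0$.

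\textbf{Number of revealed tokens.} Let $M_t=\sum_i \delta(\tilde X_t^i,x_0^i)$ be the number of still-masked coordinates. The total rate at which some coordinate is revealed is
\[
\frac{\dot\kappa_t}{1-\kappa_t}\sum_i \alpha^i(\tilde X_t)\,\delta(\tilde X_t^i,x_0^i)=\frac{\dot\kappa_t}{1-\kappa_t}\,M_t ,
\]
by the hypothesis. Each reveal lowers $M_t$ by exactly one. Hence $M_t$ is a pure-death Markov chain, with death rate $M\dot\kappa_t/(1-\kappa_t)$ from state $M$. This is exactly the generator of the count process under the standard convex path, where coordinates die independently at rate $\dot\kappa_t/(1-\kappa_t)$. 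By uniqueness of solutions to the forward equation for the count chain, $M_t\sim\mathrm{Bin}(N,1-\kappa_t)$, matching the standard path. At $t=1$ we have $\kappa_1=1$, so $M_1=0$ almost surely, every coordinate equals $x_1^i$, and $\tilde p_1=\delta(\cdot,x_1)$. The one subtlety is that the rate $\dot\kappa_t/(1-\kappa_t)$ blows up at $t=1$. I would handle this by taking the limit $t\to1$ of the binomial law rather than evaluating the generator at $t=1$.

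\textbf{Per-coordinate marginals.} Here the goal is to show that $\mathbb P(B_t^i=1)$ does not depend on $i$. This is the main obstacle, because $\alpha$ depends on the state and the per-coordinate rates are not individually controlled. I would first try a symmetry argument. The initial mask state is $B_0=0$, which is invariant under circular shifts. Since $\alpha$ depends only on $b(x)$ through $\bar\alpha$, and the shift equivariance transfers to $\bar\alpha$, the generator of $B_t$ commutes with $\operatorname{shift}_k$. The law of $B_t$ therefore inherits shift invariance, by uniqueness for the finite-state forward equation. This gives $\mathbb P(B_t^i=1)=\mathbb P(B_t^{i+k}=1)$ for all $k$, so all coordinates share a common marginal $q_t$.

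To conclude, take expectations of the count: $Nq_t=\mathbb E[N-M_t]=N\kappa_t$, which yields $\mathbb P(X_t^i=x_1^i)=\kappa_t$. A point to check is that the shift-invariance step needs all coordinates to start masked (or the assumption $x_0^i\neq x_1^i$ for all $i$). Otherwise $B_0$ is not shift invariant, and I would note this as the implicit setting. I would also verify that equivariance of $\alpha$ in $x$ really descends to $\bar\alpha$ with respect to shifts of $b$. This requires shifting $x_1$ consistently, which holds in the mask-source case since all masked entries are identical.
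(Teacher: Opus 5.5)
Your proposal is correct and follows the paper's proof essentially step for step. The normalization condition makes the total reveal rate depend only on the count, so the count process has the same generator and hence the same binomial law and endpoints; the paper tracks revealed tokens as a pure-birth chain, you track its pure-death complement. Shift invariance of the reveal-mask law, starting from $B_0=\mathbf{0}$, combined with $\mathbb{E}[\tilde M_t]=N\kappa_t$ then gives the per-coordinate marginals. Your caveats are also well placed: the paper implicitly assumes $x_0^i\neq x_1^i$ for all $i$ (it uses $B_0=\mathbf{0}$ and identifies $\delta(x^i,x_0^i)$ with $1-\delta(x^i,x_1^i)$), and its supporting proposition imposes the circular-shift equivariance directly on $\bar\alpha$ instead of deriving it from that of $\alpha$.
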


The first condition is equivalent to requiring that the weights average to \(1\) over the currently masked coordinates. Intuitively, this preserves the total update rate, and therefore the distribution of the number of tokens updated along the path. The second condition is a symmetry requirement: under circular-shift equivariant weighting, count preservation also yields matching per-coordinate marginals. The formal proof is given in \cref{sec:theory}.

\textbf{Inference-time sampling.} Our formulation enables a purely inference-time modification of the sampling procedure. Assuming \cref{thrm:main} conditions, i.e. the probability velocity $v_t$ remains factorized and the per-token marginals are preserved, we can alter the dynamics in-place, \textbf{without any fine-tuning or computational overhead.} Specifically, we inject a multiplicative weight $\alpha^i(x)$ into each coordinate of the Euler solver \citep{lipman2024flowmatchingguidecode,gat2024discrete} for $u_t$, obtaining a context-weighted update rule:

$$
\mathbb{P}(X_{t+h} = y \mid X_t = x) = \begin{cases}
    \exp\left(\boldsymbol{\alpha}(x) \cdot h u_t(x, x)\right), & y = x \\
    \frac{u_t(y, x)}{|u_t(y, x)|}\left(1 - \exp\left(\boldsymbol{\alpha}(x) \cdot h u_t(x, x)\right)\right), & y \neq x
\end{cases}
$$

Using a pre-trained Discrete Flow Matching model, at each timestep $t$ and for each token $x_t^i$, we weight the jump coefficients with $\alpha^i(X_t)$. This modification leaves the computational complexity almost unchanged, while allowing the sampling process to adapt to the local context.

We introduce two context-weighted solvers: {\color{neighbor}Neighbor-weighted} and {\color{entropy}Entropy-weighted}. Both define position-wise weights $\alpha^i(x_t)$, normalized with a softmax inverse temperature $s$ so that they satisfy the boundary conditions of \cref{thrm:main}. We provide a verification of this in the Appendix \cref{sec:alg_correct}.

For the {\color{neighbor}Neighbor-weighted} solver, we use the number of unmasked tokens in a local window of radius $r$:
\[
{\color{neighbor}\alpha}_{\mathrm{NW}}^i(x)
\propto
\sum_{j=i-r}^{i+r} \mathbf{1}[x^j = x_1^j]
\]

For the {\color{entropy}Entropy-weighted} solver, we use token-level predictive entropy available at test time:
\[
{\color{entropy}\alpha}_{\mathrm{EW}}^i(x)
\propto
H\left(p_{1\mid t}^{\theta,i}(\cdot \mid x)\right)^{-1}
\]

In practice, we use $s>0$ for Neighbor-weighted and $s<0$ for Entropy-weighted, so locally well-contextualized or low-entropy positions receive larger weights. Unlike \citet{ben2025accelerated}, our Entropy-weighted sampler is applicable to arbitrary source distributions and uses entropy only to select which positions to update, while preserving the stepwise distribution of the number of updated tokens.






\textbf{Train-time path $\tilde{p}_t$ sampling.} For general context weights $\alpha^i(x)$, the context-weighted path $\tilde{p}_t$ is not available in an analytic form, and therefore, simulation-free sampling is impossible. Moreover, the token values are no longer conditionally independent for $\tilde{p}_t$, so sampling cannot be reduced to a simple parallel factorized procedure. A straightforward approach would therefore rely on time-wise simulation of the path, which is quite costly. Instead, we leverage the fact that the distribution of the number of unmasked tokens $M_t$ is known, which allows us to construct an exact token-wise sampler with $O(N)$ complexity (see \cref{alg:path_sampling}). We first sample a number of unmasked tokens $M_t$, and then update the tokens one-by-one. This eliminates the need for costly simulation along the time axis while still returning exact samples from $\tilde{p}_t$. We prove the exactness of the algorithm in the Appendix \cref{sec:alg_correct}.

\begin{algorithm}[ht]
\caption{Train-time $\tilde{p}_t(x)$ sampling}
\label{alg:path_sampling}
\KwIn{$N$, $\kappa_t$, $x_0$, $x_1$, $\delta$, $\alpha$}
\KwOut{A sample $\tilde{X}_t \sim \tilde{p}_t$}
Sample $m \sim \mathrm{Binomial}(N, \kappa_t)$ \tcp*[r]{Total number of unmasked tokens}
Set $\tilde{X}_t \gets x_0$\;
\For{$\text{step} = 1$ \KwTo $m$}{
    Sample $i \sim (1-\delta(\tilde{X}_t^i, x_1^i)) \cdot \alpha^i(\tilde{X}_t)$ \tcp*[r]{Choose masked token position}
    Set $\tilde{X}_t^i \gets x_1^i$ \tcp*[r]{Unmask the token}
}
\Return $\tilde{X}_t$\;
\end{algorithm}

\textbf{Scaled Cross-Entropy.} Motivated by the empirical link between prediction uncertainty and the available local context (see \cref{sec:local_context}), we introduce \emph{Scaled Cross-Entropy} as a plug-in alternative to the standard Discrete Flow Matching objective:
$$
\mathcal{L}_{\alpha}(\theta)
=
-
\mathbb{E}_{t, X_1, X_0, X_t}
\left[
\sum_{i=1}^N \alpha^i(X_t)\,\log p_{1 \mid t}^{\theta, i}(X_1^i \mid X_t)
\right].
$$
Here, the context weight function $\alpha^i$ reweights the contribution of each coordinate, upweighting better-conditioned coordinates and downweighting ambiguous ones. It leaves the train-time path sampling unchanged, and therefore retains the efficiency of standard training and any-order sampling at inference time, while capturing the context-weighted importance of different token updates.


\section{Experiments}
\label{sec:experiments}


We evaluate our method on two domains: text and molecular generation. For text, we use OpenWebText (OWT) \citep{Gokaslan2019OpenWeb}, a large-scale corpus, while for molecules we use QM9 in SMILES format \citep{ramakrishnan2014quantum}, a significantly smaller dataset that allows us to study performance in the low-data regime. We follow the MDLM architecture and training setup to ensure fair comparison \citep{sahoo2024simple}. Our OWT model has approximately 170M parameters; for QM9, we modify only the vocabulary, resulting in a model with approximately 92M parameters. For text, we report generative perplexity (Gen. PPL), computed with GPT-2\citep{radford2019language}, entropy to measure text diversity, and MAUVE as a balanced quality–diversity metric \citep{pillutla2021mauve}. For QM9, we report the number of valid molecules and the number of novel samples (out of 1024 generations). All setup details are provided in the appendix \cref{sec:app_setup}.

\subsection{Inference-time context-weighted sampling}

\begin{figure}
    \centering
    \includegraphics[width=\linewidth]{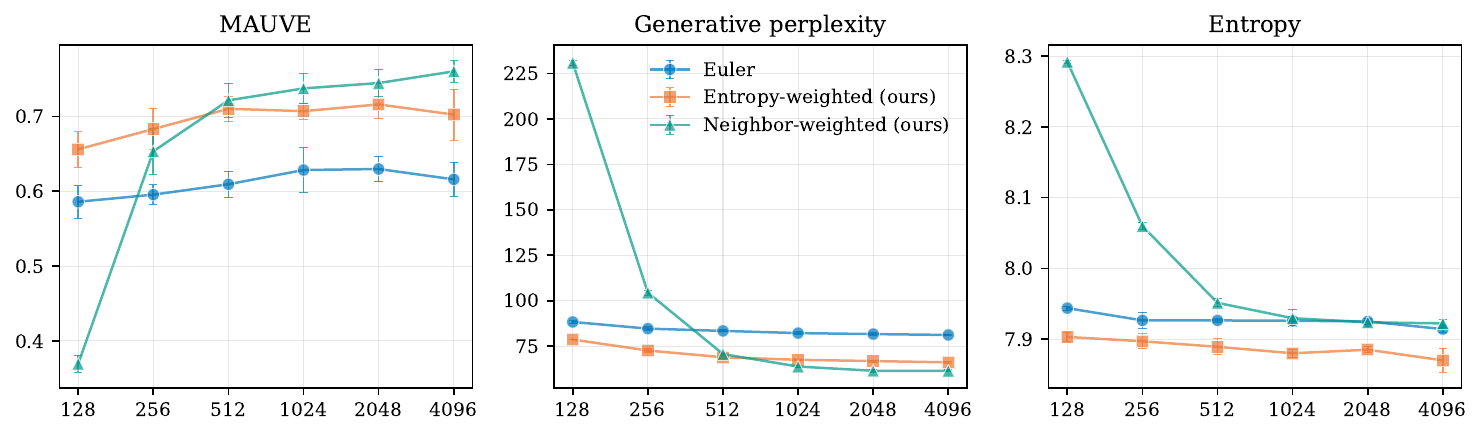}
    \caption{\textbf{Neighbor-weighted sampling improves generation quality without fine-tuning or computational overhead.} At larger NFE, it outperforms both the Euler baseline and the model-uncertainty-based Entropy-weighted solver in MAUVE and generative perplexity, while maintaining comparable token-level entropy.}
    \label{fig:nfe_vs_quality_uniform}
\end{figure}

\begin{figure}
    \centering
    \includegraphics[width=0.8\linewidth]{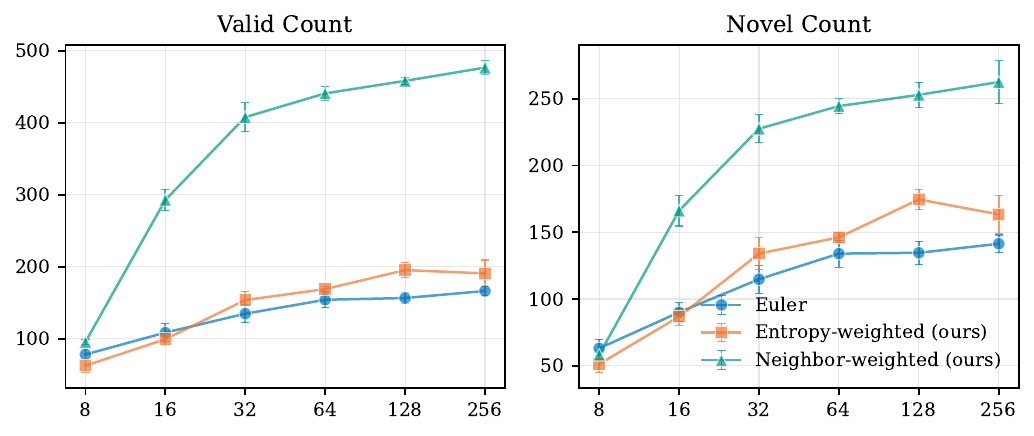}
    \caption{\textbf{Local context can be more reliable than model uncertainty in low-data regimes.} On the small QM9 dataset, Neighbor-weighted sampling outperforms both Euler and Entropy-weighted baselines across all NFE levels, nearly tripling the number of valid molecules and doubling the number of novel molecules.}
    \label{fig:nfe_vs_quality_qm9_mask}
\end{figure}



\textbf{Context-weighted sampling improves generation quality at negligible computational cost.} On OpenWebText with uniform source noise, the Neighbor-weighted solver improves MAUVE by up to $23\%$ and reduces generative perplexity by $22\%$ relative to Euler, while preserving comparable entropy (see \cref{fig:nfe_vs_quality_uniform}). Similar perplexity gains are observed with masked-source noise (see \cref{fig:nfe_vs_quality_mask} in the Appendix).

The two solvers behave differently across compute regimes. Entropy-weighted sampling is stronger at low NFE, whereas Neighbor-weighted sampling benefits from more update steps: it overtakes Euler in MAUVE from NFE $=256$ and in perplexity from NFE $=512$, and eventually surpasses Entropy-weighted sampling from NFE $=1024$. We attribute the weaker low-NFE behavior to local inconsistencies caused by neighboring tokens being updated too independently when only few updates are performed.


\textbf{Local context information is robust in the low-data regime.} On QM9, Neighbor-weighted sampling yields much larger gains than Entropy-weighted sampling, increasing valid molecules by approximately $\times 2.8$ and novel molecules by approximately $\times 1.9$ over Euler (see \cref{fig:nfe_vs_quality_qm9_mask}). This suggests that, when data is limited, model-based uncertainty estimates can be noisy, while local context remains a stable signal for guiding updates.



\begin{figure}[ht]
    \centering
    \includegraphics[width=\linewidth]{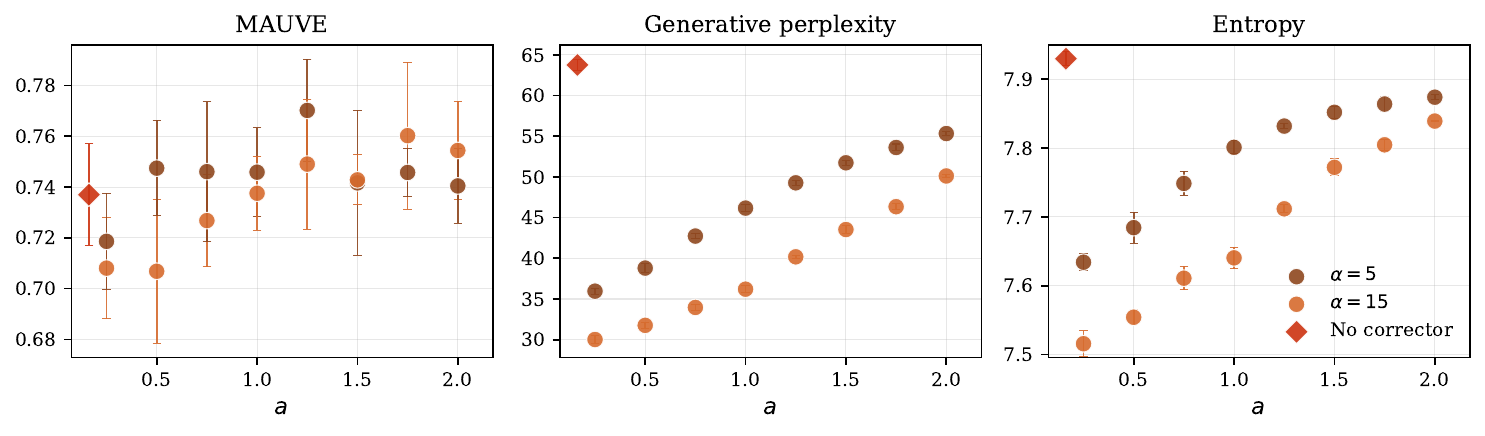}
    \caption{\textbf{Context-weighted sampling is complementary to predictor--corrector sampling.} Adding Neighbor-weighted updates to a predictor--corrector sampler further lowers generative perplexity and slightly improves MAUVE, showing that local-context weighting provides an additional quality-improving mechanism.}
    \label{fig:corrector_uniform_nw}
\end{figure}

\textbf{Context-weighted sampling is complementary to predictor--corrector schemes.}
To test whether local-context weighting provides gains beyond stronger samplers, we combine Neighbor-weighted scaling with the predictor--corrector framework of \citet{gat2024discrete}. We search over correctors of the form
\[
\alpha_t = 1 + \alpha t^a(1-t)^b,
\]
with $a=b$ to reduce the search space. On a pre-trained DFM with uniform source noise, the combined sampler further improves MAUVE and generative perplexity, at the cost of lower entropy (see \cref{fig:corrector_uniform_nw}).


\subsection{Train-time modifications and ablations}

\begin{table}[ht!]
    \caption{Context-weighted Discrete Flow Matching approaches strong semi-autoregressive block diffusion baselines in MAUVE while substantially improving over non-autoregressive baselines.} 
    \label{tab:baselines}
    \centering
    \begin{tabular}{ll|ccc}
        \toprule
        Type & Method & MAUVE ($\uparrow$) & Gen. PPL ($\downarrow$) & Entropy ($\uparrow$ )\\
        \midrule
        Data & -  & 0.890\std{0.008} & 14.66\std{0.13} & 7.86\std{0.01} \\
        \midrule
        (Semi-)AR& AR  & 0.817\std{0.017} & \textbf{12.72}\std{0.25} & 7.57\std{0.02} \\
        & BD3LM ($L'=4$)  & 0.784 \std{0.025} & 24.19\std{0.08} & 7.69\std{0.01}\\
        & BD3LM ($L'=8$)  & 0.734\std{0.022} & 29.47\std{0.08}& 7.70\std{0.01} \\
        & BD3LM ($L'=16$)  & 0.727\std{0.023} & 32.95\std{0.39} & 7.71\std{0.01} \\
        \addlinespace 
        Non-AR & SEDD & 0.574\std{0.024} & 110.23\std{0.63} & 8.14\std{0.01}\\ 
        & MDLM  & 0.685\std{0.021} & 41.73\std{0.49} & 7.73\std{0.01}\\ 
        & DFM, Uniform & 0.628\std{0.030} & 82.17\std{0.84} & 7.93\std{0.01}\\
        & DFM, Mask & 0.672\std{0.024} & 40.90\std{0.08} & 7.64\std{0.01} \\
        \midrule
        & DFM+SCE, Uniform & 0.777\std{0.017} & 30.20\std{0.22} & 7.66\std{0.01} \\
        & DFM+SCE, Mask & 0.690\std{0.010} & 38.85\std{0.50} & 7.67\std{0.01} \\
        
        \bottomrule
    \end{tabular}
\end{table}

\textbf{Scaled Cross-Entropy brings Discrete Flow Matching close to strong semi-autoregressive baselines with a minimal modification.} Scaled Cross-Entropy yields large improvements in generation quality, despite requiring only a simple plug-in change to the standard loss (see \cref{tab:baselines}). In the uniform-source setting, it improves MAUVE ($0.628 \to 0.777$) and \textbf{reduces generative perplexity by 63\%} ($82.17 \to 30.20$). This is sufficient to bring Discrete Flow Matching close to the strong semi-autoregressive block diffusion baselines ($L'=4,8$), \textbf{while preserving the flexibility of any-order generation} and the standard Euler sampler. Moreover, it clearly outperforms other non-autoregressive baselines such as SEDD and MDLM, showing that better weighting of the training signal alone can account for a substantial portion of the quality gap.



%



\begin{table}[ht!]
    \caption{DFM trained with scaled cross-entropy achieves lower perplexity while maintaining reasonable entropy.} 
    \label{tab:loss_functions}
    \centering
    \begin{tabular}{ll|ccc|cc}
        \toprule
         \multicolumn{2}{c|}{} & \multicolumn{3}{c|}{OWT} & \multicolumn{2}{c}{QM9} \\
         \cmidrule(lr){3-5}\cmidrule(lr){6-7}
         Source & Loss & MAUVE $(\uparrow)$ & Gen.\ PPL $(\downarrow)$ & Entropy $(\uparrow)$ & Valid $(\uparrow)$ & Novel $(\uparrow)$ \\
         \midrule
         \multirow{4}{*}{Uniform} & CE & 0.628 \std{0.030} & 82.17 \std{0.84} & 7.93\std{0.01} & 475.4\std{17.1} & 287.0\std{15.6} \\
          & NELBO & 0.677\std{0.035} & 74.18\std{0.40} & 7.88\std{0.01} & 481.4\std{14.4} & 299.0\std{15.8}\\
          & Bregman & 0.682\std{0.015} & 63.06\std{0.53} & 7.85\std{0.01} & 494.4\std{10.4} & \textbf{308.0}\std{8.7} \\
          & \textbf{SCE (ours)} & \textbf{0.777}\std{0.017} & \textbf{30.20}\std{0.22} & 7.66\std{0.01} & \textbf{556.0}\std{16.1} & 297.6\std{10.1} \\
         \midrule
         \multirow{4}{*}{Mask} & CE & 0.672\std{0.024} & 40.90\std{0.31} & 7.64\std{0.01} & 134.8\std{12.4} & 114.8\std{10.5} \\
          & NELBO & \underline{0.685}\std{0.020} & 40.46\std{0.24} & 7.66\std{0.01} & 158.8\std{7.0} & 128.6\std{7.4} \\
          & Bregman & 0.594\std{0.018} & 51.02\std{0.50} & 7.74\std{0.01} & 119.8\std{4.7} & 98.2\std{6.9} \\
          & \textbf{SCE (ours)} & \textbf{0.690}\std{0.010} & \textbf{38.85}\std{0.50} & 7.67\std{0.01} & \textbf{177.2}\std{11.3} & \textbf{137.8}\std{7.6} \\
        \bottomrule
    \end{tabular}
\end{table}

\textbf{Scaled Cross-Entropy is the strongest loss among the considered training objectives.} We compare the standard cross-entropy (CE) \citep{gat2024discrete}, NELBO \citep{sahoo2024simple}, Bregman divergence \citep{lipman2024flowmatchingguidecode}, and our Scaled Cross-Entropy (SCE), with results reported in \cref{tab:loss_functions}. SCE delivers the best overall performance across the considered settings. Its advantage is especially pronounced for the uniform source, where it substantially improves both MAUVE and generation perplexity on OWT relative to all alternative losses, while preserving reasonable entropy. On QM9, it also achieves the highest number of valid molecules and remains competitive in novelty. 
This suggests that context-weighted loss reweighting is a more effective training signal than the commonly used alternatives.

\begin{table}[ht]
    \centering
    \caption{Mixture ($p_t$) vs. neighbor ($\tilde{p}_t$) path comparison.}
    \label{tab:path_ablation}
    \begin{tabular}{ll|ccc|cc}
        \toprule
        \multicolumn{2}{c|}{} & \multicolumn{3}{c}{OWT} & \multicolumn{2}{c}{QM9} \\
        \cmidrule(lr){3-5}\cmidrule(lr){6-7}
        Source & Path & MAUVE $(\uparrow)$ & Gen.\ PPL $(\downarrow)$ & Entropy $(\uparrow)$ & Valid $(\uparrow)$ & Novel $(\uparrow)$ \\
        \midrule
         \multirow{2}{*}{Uniform} & $p_t$ & \textbf{0.777}\std{0.017} & \textbf{30.20}\std{0.22} & 7.66\std{0.01} & 556.0\std{16.1} & 297.6\std{10.1} \\
         & $\tilde{p}_t$ & 0.768\std{0.017} & 37.45\std{0.38} & 7.70\std{0.01} & \textbf{658.4}\std{20.3} & \textbf{310.2}\std{19.6}  \\
         \midrule
         \multirow{2}{*}{Mask} & $p_t$ & 0.690\std{0.010} & 38.85\std{0.50} & 7.67\std{0.01} & 177.2\std{11.3} & 137.8\std{7.6} \\
         & $\tilde{p}_t$ & \textbf{0.751}\std{0.012} & \textbf{36.25}\std{0.52}& 7.65\std{0.01} & \textbf{298.0}\std{13.6} & \textbf{181.6}\std{9.62}  \\ 
         \bottomrule
    \end{tabular}
\end{table}

\textbf{Training with the context-weighted path is more effective for low-data settings.} We next compare training on the standard mixture path $p_t$ and the context-weighted path $\tilde{p}_t$, while using SCE as loss function in both cases; the results are shown in \cref{{tab:path_ablation}}. On OWT with the uniform source, replacing the mixture $p_t$ with the neighbor $\tilde{p}_t$ path does not improve MAUVE or generation perplexity, although it yields a modest increase in entropy. In contrast, the context-weighted path is substantially more beneficial in the masked-source setting, where it improves both MAUVE and generation perplexity. The gains are even more pronounced on QM9, where $\tilde{p}_t$ improves both validity and novelty for both source distributions, with especially large improvements for the masked source, \textbf{nearly doubling the number of valid generated molecules.}

\textbf{An intermediate neighborhood radius provides the best quality--diversity trade-off.}
We ablate the vicinity radius $r$ used in the Neighbor-weighted scaling $\alpha^i(x)$ under the SCE objective (see \cref{fig:vicinity_size_abl} in Appendix). As $r$ increases, both generative perplexity and entropy decrease, indicating more confident but less diverse predictions. MAUVE peaks at $r=3$, suggesting that moderate context aggregation best balances coherence and diversity.

\section{Conclusion}
\label{sec:conclusion}

We studied local context in Discrete Flow Matching. Our results suggest three main takeaways. First, context-dependent sampling provides a cost-efficient inference-time improvement, orthogonal to predictor-corrector sampling techniques. Second, the context-weighted loss substantially improves training and closes much of the gap to semi-autoregressive baselines while preserving any-order generation. Finally, context-weighted paths are particularly useful in constrained and low-data regimes. Overall, local context appears to be an important factor in both training and sampling for Discrete Flow Matching. 

Several limitations remain. Neighbor-weighted sampling is less effective at very low NFE, and our experiments focus on small models and one-dimensional discrete sequences, leaving open how local-context weighting behaves for larger models, longer contexts, and higher-dimensional data such as images. Future work should develop better proxies for token-level uncertainty and use them for more adaptive probability path design, while further narrowing the gap to strong autoregressive models.

\bibliographystyle{unsrtnat}
\bibliography{bibliography}

@inproceedings{
    sahoo2024simple,
    title={Simple and Effective Masked Diffusion Language Models},
    author={Subham Sekhar Sahoo and Marianne Arriola and Aaron Gokaslan and Edgar Mariano Marroquin and Alexander M Rush and Yair Schiff and Justin T Chiu and Volodymyr Kuleshov},
    booktitle={The Thirty-eighth Annual Conference on Neural Information Processing Systems},
    year={2024},
    url={https://openreview.net/forum?id=L4uaAR4ArM}
}

@inproceedings{
    kim2025train,
    title={Train for the Worst, Plan for the Best: Understanding Token Ordering in Masked Diffusions},
    author={Jaeyeon Kim and Kulin Shah and Vasilis Kontonis and Sham M. Kakade and Sitan Chen},
    booktitle={Forty-second International Conference on Machine Learning},
    year={2025},
    url={https://openreview.net/forum?id=DjJmre5IkP}
}

@inproceedings{
    gat2024discrete,
    title={Discrete Flow Matching},
    author={Itai Gat and Tal Remez and Neta Shaul and Felix Kreuk and Ricky T. Q. Chen and Gabriel Synnaeve and Yossi Adi and Yaron Lipman},
    booktitle={The Thirty-eighth Annual Conference on Neural Information Processing Systems},
    year={2024},
    url={https://openreview.net/forum?id=GTDKo3Sv9p}
}

@misc{lipman2024flowmatchingguidecode,
      title={Flow Matching Guide and Code}, 
      author={Yaron Lipman and Marton Havasi and Peter Holderrieth and Neta Shaul and Matt Le and Brian Karrer and Ricky T. Q. Chen and David Lopez-Paz and Heli Ben-Hamu and Itai Gat},
      year={2024},
      eprint={2412.06264},
      archivePrefix={arXiv},
      primaryClass={cs.LG},
      url={https://arxiv.org/abs/2412.06264}, 
}

@misc{lozhkov2024fineweb-edu,
    author       = { Lozhkov, Anton and Ben Allal, Loubna and von Werra, Leandro and Wolf, Thomas },  
    title        = { FineWeb-Edu: the Finest Collection of Educational Content }, 
    year         = 2024,  
    url          = { https://huggingface.co/datasets/HuggingFaceFW/fineweb-edu },  
    doi          = { 10.57967/hf/2497 },
    publisher    = { Hugging Face }
}

@article{song2025seed,
  title={Seed diffusion: A large-scale diffusion language model with high-speed inference},
  author={Song, Yuxuan and Zhang, Zheng and Luo, Cheng and Gao, Pengyang and Xia, Fan and Luo, Hao and Li, Zheng and Yang, Yuehang and Yu, Hongli and Qu, Xingwei and others},
  journal={arXiv preprint arXiv:2508.02193},
  year={2025}
}

@article{gat2025set,
  title={Set block decoding is a language model inference accelerator},
  author={Gat, Itai and Ben-Hamu, Heli and Havasi, Marton and Haziza, Daniel and Reizenstein, Jeremy and Synnaeve, Gabriel and Lopez-Paz, David and Karrer, Brian and Lipman, Yaron},
  journal={arXiv preprint arXiv:2509.04185},
  year={2025}
}

@article{li2025autoregressive,
  title={Autoregressive image generation with randomized parallel decoding},
  author={Li, Haopeng and Yang, Jinyue and Li, Guoqi and Wang, Huan},
  journal={arXiv preprint arXiv:2503.10568},
  year={2025}
}

@article{zhu2025llada,
  title={LLaDA 1.5: Variance-Reduced Preference Optimization for Large Language Diffusion Models},
  author={Zhu, Fengqi and Wang, Rongzhen and Nie, Shen and Zhang, Xiaolu and Wu, Chunwei and Hu, Jun and Zhou, Jun and Chen, Jianfei and Lin, Yankai and Wen, Ji-Rong and others},
  journal={arXiv preprint arXiv:2505.19223},
  year={2025}
}

@inproceedings{nie2025large,
  title={Large Language Diffusion Models},
  author={Nie, Shen and Zhu, Fengqi and You, Zebin and Zhang, Xiaolu and Ou, Jingyang and Hu, Jun and ZHOU, JUN and Lin, Yankai and Wen, Ji-Rong and Li, Chongxuan},
  booktitle={ICLR 2025 Workshop on Deep Generative Model in Machine Learning: Theory, Principle and Efficacy},
  year={2025}
}

@article{ben2025accelerated,
  title={Accelerated Sampling from Masked Diffusion Models via Entropy Bounded Unmasking},
  author={Ben-Hamu, Heli and Gat, Itai and Severo, Daniel and Nolte, Niklas and Karrer, Brian},
  journal={arXiv preprint arXiv:2505.24857},
  year={2025}
}

@article{he2025neighboring,
  title={Neighboring autoregressive modeling for efficient visual generation},
  author={He, Yefei and He, Yuanyu and He, Shaoxuan and Chen, Feng and Zhou, Hong and Zhang, Kaipeng and Zhuang, Bohan},
  journal={arXiv preprint arXiv:2503.10696},
  year={2025}
}

@article{campbell2022continuous,
  title={A continuous time framework for discrete denoising models},
  author={Campbell, Andrew and Benton, Joe and De Bortoli, Valentin and Rainforth, Thomas and Deligiannidis, George and Doucet, Arnaud},
  journal={Advances in Neural Information Processing Systems},
  volume={35},
  pages={28266--28279},
  year={2022}
}

@article{arriola2025block,
  title={Block diffusion: Interpolating between autoregressive and diffusion language models},
  author={Arriola, Marianne and Gokaslan, Aaron and Chiu, Justin T and Yang, Zhihan and Qi, Zhixuan and Han, Jiaqi and Sahoo, Subham Sekhar and Kuleshov, Volodymyr},
  journal={arXiv preprint arXiv:2503.09573},
  year={2025}
}

@inproceedings{
    lou2024discrete,
    title={Discrete Diffusion Modeling by Estimating the Ratios of the Data Distribution},
    author={Aaron Lou and Chenlin Meng and Stefano Ermon},
    booktitle={Forty-first International Conference on Machine Learning},
    year={2024},
    url={https://openreview.net/forum?id=CNicRIVIPA}
}

@misc{merity2016pointer,
      title={Pointer Sentinel Mixture Models},
      author={Stephen Merity and Caiming Xiong and James Bradbury and Richard Socher},
      year={2016},
      eprint={1609.07843},
      archivePrefix={arXiv},
      primaryClass={cs.CL}
}

@article{pillutla2021mauve,
  title={Mauve: Measuring the gap between neural text and human text using divergence frontiers},
  author={Pillutla, Krishna and Swayamdipta, Swabha and Zellers, Rowan and Thickstun, John and Welleck, Sean and Choi, Yejin and Harchaoui, Zaid},
  journal={Advances in Neural Information Processing Systems},
  volume={34},
  pages={4816--4828},
  year={2021}
}

@misc{Gokaslan2019OpenWeb,  
	title={OpenWebText Corpus},
	author={Aaron Gokaslan and Vanya Cohen},
	howpublished={\url{http://Skylion007.github.io/OpenWebTextCorpus}}, 
	year={2019}
}

@inproceedings{
    loshchilov2018decoupled,
    title={Decoupled Weight Decay Regularization},
    author={Ilya Loshchilov and Frank Hutter},
    booktitle={International Conference on Learning Representations},
    year={2019},
    url={https://openreview.net/forum?id=Bkg6RiCqY7},
}

@inproceedings{shaulflow,
  title={Flow Matching with General Discrete Paths: A Kinetic-Optimal Perspective},
  author={Shaul, Neta and Gat, Itai and Havasi, Marton and Severo, Daniel and Sriram, Anuroop and Holderrieth, Peter and Karrer, Brian and Lipman, Yaron and Chen, Ricky TQ},
  booktitle={The Thirteenth International Conference on Learning Representations},
  year={2025}
}

@misc{bie2026llada21speedingtextdiffusion,
      title={LLaDA2.1: Speeding Up Text Diffusion via Token Editing}, 
      author={Tiwei Bie and Maosong Cao and Xiang Cao and Bingsen Chen and Fuyuan Chen and Kun Chen and Lun Du and Daozhuo Feng and Haibo Feng and Mingliang Gong and Zhuocheng Gong and Yanmei Gu and Jian Guan and Kaiyuan Guan and Hongliang He and Zenan Huang and Juyong Jiang and Zhonghui Jiang and Zhenzhong Lan and Chengxi Li and Jianguo Li and Zehuan Li and Huabin Liu and Lin Liu and Guoshan Lu and Yuan Lu and Yuxin Ma and Xingyu Mou and Zhenxuan Pan and Kaida Qiu and Yuji Ren and Jianfeng Tan and Yiding Tian and Zian Wang and Lanning Wei and Tao Wu and Yipeng Xing and Wentao Ye and Liangyu Zha and Tianze Zhang and Xiaolu Zhang and Junbo Zhao and Da Zheng and Hao Zhong and Wanli Zhong and Jun Zhou and Junlin Zhou and Liwang Zhu and Muzhi Zhu and Yihong Zhuang},
      year={2026},
      eprint={2602.08676},
      archivePrefix={arXiv},
      primaryClass={cs.LG},
      url={https://arxiv.org/abs/2602.08676}, 
}

@misc{bie2025llada20scalingdiffusionlanguage,
      title={LLaDA2.0: Scaling Up Diffusion Language Models to 100B}, 
      author={Tiwei Bie and Maosong Cao and Kun Chen and Lun Du and Mingliang Gong and Zhuochen Gong and Yanmei Gu and Jiaqi Hu and Zenan Huang and Zhenzhong Lan and Chengxi Li and Chongxuan Li and Jianguo Li and Zehuan Li and Huabin Liu and Ling Liu and Guoshan Lu and Xiaocheng Lu and Yuxin Ma and Jianfeng Tan and Lanning Wei and Ji-Rong Wen and Yipeng Xing and Xiaolu Zhang and Junbo Zhao and Da Zheng and Jun Zhou and Junlin Zhou and Zhanchao Zhou and Liwang Zhu and Yihong Zhuang},
      year={2025},
      eprint={2512.15745},
      archivePrefix={arXiv},
      primaryClass={cs.LG},
      url={https://arxiv.org/abs/2512.15745}, 
}

@inproceedings{
    zheng2024a,
    title={A Reparameterized Discrete Diffusion Model for Text Generation},
    author={Lin Zheng and Jianbo Yuan and Lei Yu and Lingpeng Kong},
    booktitle={First Conference on Language Modeling},
    year={2024},
    url={https://openreview.net/forum?id=PEQFHRUFca}
}

@article{ramakrishnan2014quantum,
  title={Quantum chemistry structures and properties of 134 kilo molecules},
  author={Ramakrishnan, Raghunathan and Dral, Pavlo O and Rupp, Matthias and Von Lilienfeld, O Anatole},
  journal={Scientific data},
  volume={1},
  number={1},
  pages={1--7},
  year={2014},
  publisher={Nature Publishing Group}
}

@article{radford2019language,
  title={Language models are unsupervised multitask learners},
  author={Radford, Alec and Wu, Jeffrey and Child, Rewon and Luan, David and Amodei, Dario and Sutskever, Ilya and others},
  journal={OpenAI blog},
  volume={1},
  number={8},
  pages={9},
  year={2019}
}

@inproceedings{gu2022vector,
  title={Vector quantized diffusion model for text-to-image synthesis},
  author={Gu, Shuyang and Chen, Dong and Bao, Jianmin and Wen, Fang and Zhang, Bo and Chen, Dongdong and Yuan, Lu and Guo, Baining},
  booktitle={Proceedings of the IEEE/CVF conference on computer vision and pattern recognition},
  pages={10696--10706},
  year={2022}
}

@inproceedings{vignacdigress,
  title={DiGress: Discrete Denoising diffusion for graph generation},
  author={Vignac, Clement and Krawczuk, Igor and Siraudin, Antoine and Wang, Bohan and Cevher, Volkan and Frossard, Pascal},
  booktitle={The Eleventh International Conference on Learning Representations},
  year={2023}
}

@inproceedings{kelviniuswyckoffdiff,
  title={WyckoffDiff--A Generative Diffusion Model for Crystal Symmetry},
  author={Kelvinius, Filip Ekstr{\"o}m and Andersson, Oskar B and Parackal, Abhijith S and Qian, Dong and Armiento, Rickard and Lindsten, Fredrik},
  booktitle={Forty-second International Conference on Machine Learning},
  year={2025}
}

@inproceedings{besnierhalton,
  title={Halton Scheduler for Masked Generative Image Transformer},
  author={Besnier, Victor and Chen, Mickael and Hurych, David and Valle, Eduardo and Cord, Matthieu},
  booktitle={The Thirteenth International Conference on Learning Representations},
  year={2025}
}

@inproceedings{paperno2016lambada,
  title={The LAMBADA dataset: Word prediction requiring a broad discourse context},
  author={Paperno, Denis and Kruszewski, Germ{\'a}n and Lazaridou, Angeliki and Pham, Ngoc-Quan and Bernardi, Raffaella and Pezzelle, Sandro and Baroni, Marco and Boleda, Gemma and Fern{\'a}ndez, Raquel},
  booktitle={Proceedings of the 54th annual meeting of the association for computational linguistics (volume 1: Long papers)},
  pages={1525--1534},
  year={2016}
}


\appendix

\section{Theoretical results}
\label{sec:theory}

\subsection{Discrete Flow Matching background}
\label{sec:dfm_background}

We consider discrete sequences $x \in \mathcal{D}^N$ of length $N$, where $\mathcal{D}$ is a finite vocabulary. We denote the $i$-th coordinate of $x$ by $x^i$ and the sequence without the $i$-th coordinate by $x^{\overline{i}}$. For $a \in \mathcal{D}$, we write $\delta(x^i, a) = \mathbf{1}[x^i = a]$ for a Dirac delta function. We consider independent coupling $(X_0, X_1)\sim p(X_0)q(X_1)$, where $X_0$ is sampled from a simple source noise distribution $p(X_0)$ and $X_1 \sim p_{\text{data}}$ is a data point.

Discrete Flow Matching~\citep{gat2024discrete} defines a probability path $p_t(x \mid x_0, x_1)$ between $x_0$ and $x_1$, and models its evolution via a Continuous-Time Markov Chain (CTMC). The dynamics are governed by the Kolmogorov forward equation:
$$
\frac{d}{dt}p_t(x \mid x_0, x_1)
=
\sum_{z \in \mathcal{D}^N} \sum_{i=1}^N
\delta(x^{\overline{i}}, z^{\overline{i}})
\, p_t(z \mid x_0, x_1)\,
u_t^i(x^i, z \mid x_0, x_1),
$$
where $u_t^i(x^i, z \mid x_0, x_1)$ denotes the coordinate-wise probability velocity, specifying the rate of transition at position $i$.

One key aspect of the standard DFM setup is that the velocity factorizes over coordinates:
$$
u_t(x, z \mid x_0, x_1)
=
\sum_{i=1}^N u_t^i(x^i, z \mid x_0, x_1).
$$
Importantly, this factorization does not imply that the distribution $p_t(x \mid x_0, x_1)$ factorizes across coordinates. However, it enables efficient coordinate-wise sampling, as the CTMC can be simulated using an Euler discretization, updating the tokens independently of each other.

A common choice in DFM is the convex mixture  path, defined independently for each coordinate as
$$
p_t(x^i \mid x_0, x_1)
=
(1-\kappa_t)\delta(x^i, x_0^i)
+
\kappa_t \delta(x^i, x_1^i),
$$
where $\kappa_t$ is a monotonically increasing scheduler. The corresponding coordinate-wise velocity is then given by
$$
u^i_t(x^i, z \mid x_0, x_1)
=
\frac{\dot{\kappa}_t}{1 - \kappa_t}
\left[
\delta(x^i, x_1^i) - \delta(x^i, z)
\right].
$$
This construction leads to a simple and efficient training and sampling procedure, but treats all coordinates uniformly, regardless of their conditioning context. 

At inference time, DFM simulates the learned CTMC on a grid
$0=t_0<t_1<\dots<t_K=1$. Given the current state $X_t$, the model first samples a clean-token proposal for each coordinate,
\[
X_1^i \sim p_{1\mid t}^{\theta,i}(\cdot \mid X_t).
\]
It then applies an Euler step using the conditional velocity obtained by replacing the data endpoint $x_1^i$ in the convex-path velocity with the sampled proposal $Y^i$:
\[
X_{t+h}^i
\sim
\delta_{X_t^i}(\cdot)
+
h\,u_t^i(\cdot, X_t \mid X_0, X_1),
\]
Thus the Euler step either keeps coordinate $i$ unchanged or moves it toward the model-sampled clean token $X_1^i$ with rate $\dot{\kappa}_t/(1-\kappa_t)$.


\subsection{Boundary conditions and marginals}
\label{sec:boundary_conditions}

We prove Theorem 1 in three steps. First, we show that under the standard convex path, the number of unmasked tokens follows a pure-birth CTMC with binomial marginals. Second, we show that the normalization condition on $\alpha$ makes the context-weighted path induce the same CTMC for this count process, which gives the endpoint constraints. Finally, under permutation equivariance, the unmasked coordinates are exchangeable, so matching the count distribution also gives the correct per-coordinate marginals.

For a path $X_t$, define the number of unmasked tokens
$$
M_t = \sum_{i=1}^N \mathbf{1}[X_t^i = x_1^i].
$$
Under the standard convex path, each coordinate is unmasked independently with probability $\kappa_t$, hence
$$
M_t \sim \mathrm{Binomial}(N,\kappa_t).
$$
The following proposition describes the corresponding count process.





\begin{restatable}{proposition}{MCTMC}
\label{prop:m_ctmc}
$M_t$ is a CTMC 
$$
\mathbb{P}(M_{t+h} = y \mid M_t = x) = \delta(y, x) + hq(y, x) + o(h^2)
$$
    
where
$$q(y, x) = 
\begin{cases}
    (N - x) \frac{\dot{\kappa}_t}{1-\kappa_t}, & y = x+1 \\
    -(N - x) \frac{\dot{\kappa}_t}{1-\kappa_t}, & y = x \\
    0, & \text{otherwise}
\end{cases}
$$
\end{restatable}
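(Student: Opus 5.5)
The plan is to work directly with the coordinate-wise CTMC of the standard convex path and aggregate over coordinates. Under the factorized velocity $u_t^i(\cdot, z\mid x_0,x_1)=\frac{\dot\kappa_t}{1-\kappa_t}[\delta(\cdot,x_1^i)-\delta(\cdot,z^i)]$, a masked coordinate ($z^i=x_0^i\neq x_1^i$) jumps to $x_1^i$ at rate $\lambda_t:=\dot\kappa_t/(1-\kappa_t)$. An unmasked coordinate has zero outgoing rate, since $\delta(x_1^i,x_1^i)-\delta(x_1^i,z^i)=0$. So each coordinate performs a single absorbing jump, and $M_t$ is nondecreasing. I will assume $x_0^i\neq x_1^i$ for all $i$, or else treat coincident coordinates as already unmasked; for the uniform source this is a measure-zero issue on a per-coordinate basis, and I would note it explicitly.

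\textbf{Step 1: the transition law of $M$ given the full state.} Fix a state $z$ with $M=m$ unmasked coordinates. From the CTMC transition kernel $\delta(z,y)+h\,u_t(y,z)+o(h)$ and the factorization, the probability that exactly one coordinate flips in $[t,t+h]$ is $\sum_{i:\,z^i\ \text{masked}}\lambda_t h+o(h)=(N-m)\lambda_t h+o(h)$. The probability of two or more flips is $o(h)$, since single-coordinate rates are bounded and simultaneous jumps carry zero rate. The probability of no change is $1-(N-m)\lambda_t h+o(h)$. The key observation is that these quantities depend on $z$ only through $m$.

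\textbf{Step 2: conditioning on $M_t$ alone.} I would average Step 1 over $\mathbb P(X_t=z\mid M_t=m)$. Because the conditional transition probabilities depend only on $m$, the averaged kernel is
\[
\mathbb P(M_{t+h}=y\mid M_t=m)=\delta(y,m)+h\,q(y,m)+o(h),
\]
with $q$ as stated. The Markov property of $M$ follows by the same lumpability argument: conditioning additionally on the past of $M$ does not change the result, because the Step 1 rates are functions of $m$ only (Dynkin's criterion for lumpable chains). Alternatively, I can verify consistency directly. Under the convex path, $M_t\sim\mathrm{Binomial}(N,\kappa_t)$, and one checks that this pure-birth generator has binomial solutions: the forward equation $\frac{d}{dt}P_t(m)=\lambda_t[(N-m+1)P_t(m-1)-(N-m)P_t(m)]$ is satisfied by $\binom Nm\kappa_t^m(1-\kappa_t)^{N-m}$, which is a routine differentiation.

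\textbf{Main obstacle.} The main obstacle is rigor about the error term. The statement writes $o(h^2)$, but the natural and correct bound is $o(h)$, and I would prove $o(h)$. The bound comes from the product of independent per-coordinate survival probabilities $e^{-\int_t^{t+h}\lambda_s ds}$: expanding $\binom{N-m}{k}(1-e^{-\Lambda})^k e^{-(N-m-k)\Lambda}$ with $\Lambda=\lambda_t h+o(h)$ gives the claimed rates and an $O(h^2)$ probability for $k\ge2$. The Markov/lumpability justification also needs care, but it reduces cleanly to the fact that the rates depend on $z$ only through its mask count.
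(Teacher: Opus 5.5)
Your argument is correct, but it takes a different route from the paper. The paper's proof is essentially your ``alternatively'' check. It assumes $M_t\sim\mathrm{Binomial}(N,\kappa_t)$, differentiates $\binom{N}{m}\kappa_t^m(1-\kappa_t)^{N-m}$, and reads off $q(m,m-1)=(N-m+1)\lambda_t$ and $q(m,m)=-(N-m)\lambda_t$ from the resulting forward equation, where $\lambda_t=\dot\kappa_t/(1-\kappa_t)$. That is quick, but it has three limitations:
\begin{itemize}
\item it only shows that the one-time marginals solve the forward equation of the pure-birth generator, which does not by itself make $M_t$ Markov;
\item it relies on the explicit binomial law, and hence on $x_0^i\neq x_1^i$ for all $i$;
\item it says nothing about the remainder.
\end{itemize}

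Your main argument instead reads the rates off the factorized CTMC and gets the Markov property from strong lumpability: every state $z$ with count $m$ exits into $\{M=m+1\}$ at the same total rate $(N-m)\lambda_t$. This needs no knowledge of the marginals and works for any initial count. It is also the mechanism the paper uses implicitly in its next proposition, where normalizing $\alpha$ again makes the block exit rate depend only on $m$.

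Your remark on the error term is right. Since $\mathbb{P}(M_{t+h}=x+2\mid M_t=x)=\binom{N-x}{2}\lambda_t^2h^2+o(h^2)$, this probability is not $o(h^2)$ when $N-x\ge 2$ and $\lambda_t>0$. So the statement should read $o(h)$, which is all the paper's argument supports anyway.

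One small slip: with a finite vocabulary and a uniform source, $\mathbb{P}(x_0^i=x_1^i)=1/|\mathcal D|>0$, so coincident coordinates are not a measure-zero issue. Your fallback of counting them as unmasked, which matches the paper's own definition, handles this and leaves the rate $(N-x)\lambda_t$ unchanged. The marginal in your consistency check then becomes a shifted binomial rather than $\mathrm{Binomial}(N,\kappa_t)$.
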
 

\begin{proof}
Let's derive the transition probabilities from $p_t(m) = \mathbb{P}(M_t = m)$:
\begin{align*}
    \frac{d}{dt}p_t(m) &= \frac{d}{dt} \binom{N}{m} (\kappa_t)^{m} (1-\kappa_t)^{N-m} \\
    & = m \binom{N}{m} (\kappa_t)^{m-1} (1-\kappa_t)^{N-m} \dot{\kappa}_t + (N-m) \binom{N}{m}  (\kappa_t)^{m} (1-\kappa_t)^{N-m-1}(-\kappa_t) \\
    & = \frac{\dot{\kappa}_t}{1-\kappa_t} m \binom{N}{m} (\kappa_t)^{m-1} (1-\kappa_t)^{N-(m-1)} - \frac{\dot{\kappa}_t}{1-\kappa_t} (N-m) \binom{N}{m}  (\kappa_t)^{m} (1-\kappa_t)^{N-m} \\
    & = \frac{\dot{\kappa}_t}{1-\kappa_t}(N-(m-1)) \binom{N}{m-1} (\kappa_t)^{m-1} (1-\kappa_t)^{N-(m-1)} - \frac{\dot{\kappa}_t}{1-\kappa_t} (N-m) p_t(m) \\
    &= \frac{\dot{\kappa}_t}{1-\kappa_t}(N-(m-1))p_t(m-1)  - \frac{\dot{\kappa}_t}{1-\kappa_t} (N-m) p_t(m)
\end{align*}

Note the transition probabilities
\begin{align*}
    q_t(m, m-1) &= \frac{\dot{\kappa}_t}{1-\kappa_t}(N-(m-1)) \\
    q_t(m, m) &= -\frac{\dot{\kappa}_t}{1-\kappa_t}(N-m)
\end{align*}

and so

\begin{align*}
    \frac{d}{dt}p_t(m) = q_t(m, m-1)p_t(m-1) + q(m, m)p_t(m) = \sum_{z=0}^N q(m, z)p_t(z)
\end{align*}

which is exactly the Kolmogorov forward equation with $q(y, x)$.
\end{proof}



We next show that the context-weighted path has the same count process whenever the weights are normalized over the masked coordinates. This is the key property needed for the endpoint constraints: if $\tilde M_t$ has the same law as $M_t$, then $\tilde M_0=0$ and $\tilde M_1=N$ almost surely.

\begin{restatable}{proposition}{normcondition}
\label{prop:norm_condition}
A sufficient condition for $\tilde{M}_t \sim \text{Binomial}(N, \kappa_t)$ is:

$$
\sum_{i = 1}^N \alpha^i(x_t) \cdot \delta(x^i_t, x^i_0) = \sum_{i=1}^N \delta(x^i_t, x^i_0)
$$
\end{restatable}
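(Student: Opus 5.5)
We will show that, under the stated normalization, the count process $\tilde M_t=\sum_i \mathbf 1[\tilde X_t^i=x_1^i]$ of the context-weighted chain is itself a CTMC with exactly the generator $q$ of \cref{prop:m_ctmc}. Together with the common initial condition $\tilde M_0=M_0=0$, uniqueness of solutions to the (finite-dimensional, linear) Kolmogorov forward equation for the count law then gives $\tilde M_t\overset{d}{=}M_t\sim\mathrm{Binomial}(N,\kappa_t)$.

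\textbf{Step 1: compute the aggregated jump rate.} Fix a current state $z$ of the context-weighted chain with $m$ unmasked coordinates. Under $v_t^i=\alpha^i(z)u_t^i$, coordinate $i$ can only jump if $z^i\neq x_1^i$, which for the convex path means $z^i=x_0^i$. Its only target is $x_1^i$, and the rate is
\[
\alpha^i(z)\,\frac{\dot\kappa_t}{1-\kappa_t}\,\delta(z^i,x_0^i).
\]
Every such jump raises the count by exactly one, and no jump lowers it. Summing over $i$, the total rate of $m\to m+1$ from state $z$ is
\[
\frac{\dot\kappa_t}{1-\kappa_t}\sum_i\alpha^i(z)\,\delta(z^i,x_0^i)=\frac{\dot\kappa_t}{1-\kappa_t}\sum_i\delta(z^i,x_0^i)=\frac{\dot\kappa_t}{1-\kappa_t}(N-m),
\]
by the hypothesis. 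A caveat: this uses $x_0^i\neq x_1^i$. For a uniform source, coordinates with $x_0^i=x_1^i$ are already unmasked at time $0$, so the count should be taken over the coordinates where $x_0^i\neq x_1^i$; the masked-source case needs no adjustment.

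\textbf{Step 2: lumpability.} The rate computed in Step 1 depends on $z$ only through $m$. This is the Dynkin/strong lumpability criterion, so $\tilde M_t$ is Markov with generator $q$. Concretely, we can sum the forward equation for $\tilde p_t(z)$ over $\{z:\ \#\text{unmasked}=m\}$. The inflow into level $m$ comes only from level $m-1$ with total rate $(N-m+1)\dot\kappa_t/(1-\kappa_t)$, weighted by mass; the outflow from level $m$ has total rate $(N-m)\dot\kappa_t/(1-\kappa_t)$. This reproduces exactly the ODE derived in the proof of \cref{prop:m_ctmc}.

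\textbf{Step 3: uniqueness.} The binomial law solves that ODE with initial condition $\delta_0$ (this is the content of \cref{prop:m_ctmc}). The system is linear with coefficients continuous on $[0,1)$, so the solution is unique, and the binomial law holds for all $t<1$. The endpoint $t=1$ follows by continuity, since $\kappa_t\to1$.

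The main obstacle is Step 2: making the level-set summation rigorous, in particular correctly bookkeeping the inflow term so that it collapses to $(N-m+1)\tilde P(\tilde M_t=m-1)$. That is precisely where the normalization hypothesis is applied, state by state, inside the sum.
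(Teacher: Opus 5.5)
Your proposal is correct and follows the same route as the paper. The paper likewise compares the staying probability $\mathbb{P}(\tilde X_{t+h}=x\mid \tilde X_t=x)$, i.e.\ the total exit rate from a state, uses the normalization to show it equals $1-h\frac{\dot\kappa_t}{1-\kappa_t}(N-m)+o(h)$, and concludes directly that $\tilde M_t$ is the same pure-birth CTMC as $M_t$. Your version adds two things the paper leaves implicit: the level-set summation (lumpability) and the ODE-uniqueness step. Your caveat about coordinates with $x_0^i=x_1^i$ under the uniform source is also valid; the paper's derivation silently assumes it away when it replaces $1-\delta(x^i,x_1^i)$ by $\delta(x^i,x_0^i)$.
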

\begin{proof}
First, note that probability of $M_t$ not changing in $[t, t+h)$ is the same as probability of $X_t$ not changing:
\begin{align*}
    \mathbb{P}(X_{t+h} = x \mid X_{t} = x, X_0 = x_0, X_1 = x_1) &= 1 + h \sum_{i=1}^N u^i_t(x^i, x \mid x_0, x_1) + o(h) \\
    & = 1 + h\sum_{i=1}^N \frac{\dot{\kappa}_t}{1-\kappa_t} \left[ \delta(x^i, x_1^i) - 1 \right] + o(h) \\
    & = 1 - h \frac{\dot{\kappa}_t}{1-\kappa_t} (N-\sum_{i=1}^N \delta(x^i, x^i_1)) + o(h) \\
    & = 1 - h \frac{\dot{\kappa}_t}{1-\kappa_t} (N-m) + o(h) \\
    & = \mathbb{P}(M_{t+h} = m \mid M_t = m)
\end{align*}

which makes sense since $x^i_1$ is absorbing for each token, so no cyclic movements ($x_0^i \to x_1^i \to x_0^i$) are allowed. Now, if we plug in $v^i_t$ instead of $u^i_t$ in the same derivation

\begin{align*}
    \mathbb{P}(\tilde{X}_{t+h} = x \mid \tilde{X}_{t} = x, \tilde{X}_0 = x_0, \tilde{X}_1 = x_1) &= 1 + h \sum_{i=1}^N v^i_t(x^i, x \mid x_0, x_1) + o(h) \\
        & = 1 - h \frac{\dot{\kappa}_t}{1-\kappa_t}  \sum_{i=1}^N \alpha^i(x) \delta(x^i, x^i_0) + o(h) \\
\end{align*}

so if $\sum_{i=1}^N\alpha^i(x) \delta(x^i, x^i_0) = \sum_{i=1}^N \delta(x^i, x^i_0) = N-m$, we have 

$$
P(\tilde{X}_{t+h} = x \mid \tilde{X}_{t} = x, \tilde{X}_0 = x_0, \tilde{X}_1 = x_1) = \mathbb{P}(X_{t+h} = x \mid X_{t} = x, X_0 = x_0, X_1 = x_1)
$$

and so $\tilde{M}_t$ is following the same CTMC as $M_t$.
\end{proof}

Thus the normalization condition preserves the distribution of the number of unmasked tokens. Since $\kappa_0=0$ and $\kappa_1=1$, it follows that $\tilde M_0=0$ and $\tilde M_1=N$ almost surely, which implies
\[
\tilde p_0(x\mid x_0,x_1)=\delta(x,x_0),
\qquad
\tilde p_1(x\mid x_0,x_1)=\delta(x,x_1).
\]

Equivalently, the normalization condition requires the weights to average to one over the currently masked coordinates:
\[
\frac{1}{|\{i \colon z^i = x_0^i\}|}
\sum_{i \colon z^i = x_0^i} \alpha^i(z) = 1.
\]
This controls only the total number of updates, not which coordinates receive them. For example, a positional bias such as $\alpha^i(z)\propto (N-i)$ can preserve the total rate while making left-most coordinates transition earlier. To recover the standard per-coordinate marginals, we additionally require the weighting rule to treat positions symmetrically. Let $\operatorname{shift}_k(x)$ denote a circular shift of $x$ by $k$ positions, i.e.
\[
\operatorname{shift}_k(x)^i = x^{i-k \!\!\!\pmod N}.
\]

\begin{restatable}{proposition}{shiftequiv}
\label{prop:shift_equiv}
Let $B_t \in \{0,1\}^N$ denote the binary reveal mask, where
$B_t^i=1$ if coordinate $i$ has transitioned to $x_1^i$, and
$B_t^i=0$ otherwise. Suppose that the context weights depend only on
the reveal mask, i.e., there exists a function
\[
\bar{\alpha}:\{0,1\}^N \to \mathbb{R}_+^N
\]
such that
\[
\alpha^i(\tilde X_t)=\bar{\alpha}^i(B_t)
\qquad \forall i.
\]
Assume that $\bar{\alpha}$ is equivariant under circular shifts:
\[
\bar{\alpha}^{i+k}\bigl(\operatorname{shift}_k(b)\bigr)
=
\bar{\alpha}^i(b)
\qquad
\forall b\in\{0,1\}^N,\ \forall i,k,
\]
with indices taken modulo $N$, and that the normalization condition in
\cref{prop:norm_condition} holds. Then
\[
\mathbb{P}(B_t^i=1\mid x_0,x_1)= \mathbb{P}(\tilde{X}_t^i=x_1^i\mid x_0,x_1) = \kappa_t
\qquad \forall i.
\]
\end{restatable}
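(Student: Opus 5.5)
The plan is to reduce everything to the reveal-mask process $B_t$ and use a symmetry argument: shift-equivariance makes the law of $B_t$ invariant under circular shifts, so all coordinates have the same reveal probability, and \cref{prop:norm_condition} pins down the sum of these probabilities.

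\textbf{Step 1: $B_t$ is an autonomous CTMC.} Fix $x_0,x_1$ and assume $x_0^i\neq x_1^i$ for all $i$. This holds for the masked source. For the uniform source, coordinates with $x_0^i=x_1^i$ are revealed from the start; I would either exclude them or note that the convention ``unmasked iff $x^i=x_1^i$'' is meant with $x_0^i\neq x_1^i$. Under this assumption, $\tilde X_t$ and $B_t$ determine each other via $\tilde X_t^i = x_1^i$ if $B_t^i=1$ and $x_0^i$ otherwise. Plugging the convex-path velocity into $v_t^i=\alpha^i u_t^i$ shows that the only transitions flip a single $b^i$ from $0$ to $1$, with rate
\[
\lambda_t^i(b)=\frac{\dot\kappa_t}{1-\kappa_t}\,(1-b^i)\,\bar\alpha^i(b).
\]
These rates depend only on $b$, so $B_t$ is a CTMC on $\{0,1\}^N$ with generator $Q_t$ and initial state $B_0=\mathbf 0$.

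\textbf{Step 2: the law of $B_t$ is shift-invariant.} Let $S_k$ denote the circular shift acting on $\{0,1\}^N$, and set $B'_t=S_kB_t$. Its rate of flipping coordinate $j=i+k$ from state $b'=S_kb$ equals $\lambda_t^{i}(b)$. By the equivariance assumption and $(1-b'^{\,i+k})=(1-b^i)$, this is exactly $\lambda_t^{j}(b')$. Hence $B'_t$ is a CTMC with the same generator $Q_t$ and the same initial state $S_k\mathbf 0=\mathbf 0$.

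The Kolmogorov forward equation is a finite linear ODE, and the rates are continuous on $[0,1)$, so its solution is unique. Therefore $B_t\overset{d}{=}S_kB_t$ for every $k$ and $t<1$; the case $t=1$ follows by a limit or directly from the endpoint constraint. I expect the main care to be needed here: verifying the index bookkeeping in the equivariance identity, and justifying uniqueness despite the singularity of $\dot\kappa_t/(1-\kappa_t)$ at $t=1$.

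\textbf{Step 3: conclude.} Shift-invariance gives $\mathbb P(B_t^i=1)=\mathbb P(B_t^{i+k}=1)$ for all $i,k$, so all $N$ marginals share a common value $\pi_t$. By \cref{prop:norm_condition}, $\tilde M_t=\sum_i B_t^i\sim\mathrm{Binomial}(N,\kappa_t)$, hence
\[
N\pi_t=\mathbb E[\tilde M_t]=N\kappa_t.
\]
This gives $\pi_t=\kappa_t$ for every coordinate, and hence $\mathbb P(\tilde X_t^i=x_1^i\mid x_0,x_1)=\kappa_t$.
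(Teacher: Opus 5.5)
Your proposal is correct and follows the same route as the paper's proof. Both arguments use shift-equivariance of the rates together with $B_0=\mathbf 0$ to make the law of $B_t$ shift-invariant, which forces all marginals to be equal; then $\mathbb E[\tilde M_t]=N\kappa_t$ from \cref{prop:norm_condition} fixes the common value at $\kappa_t$. The paper only asserts several steps that you make explicit: the autonomous rates $\lambda_t^i(b)$, the index check $\lambda_t^{i+k}(\operatorname{shift}_k(b))=\lambda_t^i(b)$, uniqueness of the forward equation on $[0,1)$, and the implicit assumption $x_0^i\neq x_1^i$ behind ``$B_0=\mathbf 0$'', which matters for the uniform source.
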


\begin{proof}
Let
\[
S_t=\{i:B_t^i=1\}
\]
be the set of revealed coordinates, so that
\[
|S_t|=\tilde M_t.
\]
Since $\bar{\alpha}$ is equivariant under circular shifts, shifting the
reveal mask and the corresponding coordinate by the same offset
preserves all transition rates. Since $B_0=\mathbf{0}$ is invariant
under circular shifts, the law of $B_t$ is also invariant under
circular shifts. Hence, for every offset $k$,
\[
\mathbb{P}(S_t=S\mid x_0,x_1)
=
\mathbb{P}
\bigl(S_t=\operatorname{shift}_k(S)\mid x_0,x_1\bigr).
\]

Thus all coordinates have the same marginal probability of being
revealed. Indeed, for any $i,j$, there exists a circular shift mapping
$i$ to $j$, and therefore
\[
\mathbb{P}(i\in S_t\mid x_0,x_1)
=
\mathbb{P}(j\in S_t\mid x_0,x_1).
\]
Denote this common probability by $p_t$. Then
\[
\mathbb{E}[\tilde M_t\mid x_0,x_1]
=
\mathbb{E}[|S_t|\mid x_0,x_1]
=
\sum_{i=1}^N
\mathbb{P}(i\in S_t\mid x_0,x_1)
=
Np_t.
\]

By \cref{prop:norm_condition}, the normalization condition implies
\[
\tilde M_t\sim\mathrm{Binomial}(N,\kappa_t),
\]
and hence
\[
\mathbb{E}[\tilde M_t\mid x_0,x_1]=N\kappa_t.
\]
Combining the two equalities gives
\[
p_t=\kappa_t.
\]
Therefore,
\[
\mathbb{P}(\tilde X_t^i=x_1^i\mid x_0,x_1)=\kappa_t.
\]
\end{proof}

Now, we are ready to prove the theorem 1.

\begin{proof}[Proof of \cref{thrm:main}]
By \cref{prop:norm_condition}, the normalization condition implies that $\tilde M_t$ has the same law as the count process of the standard convex path, namely $\mathrm{Binomial}(N,\kappa_t)$. Since $\kappa_0=0$ and $\kappa_1=1$, this gives the endpoint constraints. If $\alpha$ is additionally equivariant under circular shifts, \cref{prop:shift_equiv} gives the coordinate-wise marginal

\[
\mathbb{P}(\tilde X_t^i=x_1^i\mid x_0,x_1)=\kappa_t
\]

for every coordinate $i$.

\end{proof}

\subsection{Correctness of sampling}
\label{sec:alg_correct}

\paragraph{Solver normalization and marginals.}
Both Neighbor-weighted and Entropy-weighted solvers are normalized over the currently masked coordinates. If $\tilde{\alpha}^i(x_t)$ denotes the raw score, we set
\[
\alpha^i(x_t)
\propto
\exp(s\tilde{\alpha}^i(x_t)),
\qquad i \in \mathcal M_t=\{i:x_t^i=x_0^i\},
\]
and choose the proportionality constant so that
\[
\frac{1}{|\mathcal M_t|}\sum_{i\in\mathcal M_t}\alpha^i(x_t)=1.
\]
Therefore both solvers satisfy the normalization condition in \cref{prop:norm_condition} and preserve the distribution of the total number of unmasked tokens.

For Neighbor-weighted, if local windows are computed with circular padding, the rule is shift-equivariant: shifting the sequence shifts all neighbor counts by the same amount. Hence all positions are treated symmetrically, and the per-coordinate marginals are preserved by \cref{prop:shift_equiv}. 

For Entropy-weighted, the normalization still preserves the total number of updates, and therefore the endpoint constraints. However, exact circular-shift equivariance of the model predictions is not guaranteed:
\[
H\!\left(p^{\theta, (i+k \bmod N)}(\cdot \mid \operatorname{shift}_k(x_t))\right)
\neq
H\!\left(p^{\theta, i}(\cdot \mid x_t)\right)
\]
in general. Therefore, Entropy-weighted sampling is not covered by the per-coordinate marginal preservation result: the model may induce position- or value-dependent preferences over the reveal order.

\paragraph{Exactness of train-time path sampling.}
\cref{alg:path_sampling} avoids simulating the full continuous-time trajectory by separating the process into the number of jumps and their identities. By \cref{prop:norm_condition}, the number of unmasked tokens at time $t$ is preserved:
\[
\tilde M_t \sim \mathrm{Binomial}(N,\kappa_t).
\]
Thus we can first sample $m \sim \mathrm{Binomial}(N,\kappa_t)$. It remains to sample which $m$ coordinates are unmasked. Under the context-weighted CTMC, if the current state is $x$, a masked coordinate $i$ has transition rate
\[
\alpha^i(x)\frac{\dot\kappa_t}{1-\kappa_t}.
\]
Therefore, conditioned on a jump occurring, the next coordinate is selected with probability
\[
\frac{\alpha^i(x)}
{\sum_{j:x^j=x_0^j}\alpha^j(x)},
\]
since the time-dependent factor cancels. This is exactly the sequential selection rule used in \cref{alg:path_sampling}. Hence the algorithm samples the same time marginal $\tilde p_t$ as the continuous-time process.

\subsection{Evidence Lower Bound (ELBO)}

In the case of the mixture path, \citet{shaulflow} presents a formula for evidence lower bound (ELBO) 

$$
\log p_1^{\theta}(x_1) \geq - \mathbb{E}_{t, X_0, X_t \sim p_{t \mid 0, 1}} \sum_i D\left(u^i_t(\cdot, X^i_t \mid X_0, x_1), u^{\theta, i}_t(\cdot, X_t)\right)
$$

where 

$$
D\left(u^i_t(\cdot, x^i \mid x_0, x_1), u^{\theta, i}_t(\cdot, x)\right) = \frac{\dot{\kappa}_t}{1-\kappa_t} \left[\left(\delta(x_1^i, x^i) - 1\right) \log p_{1 \mid t}^{\theta, i}(x^i_1 \mid x) + \delta(x^i_1, x^i) - p_{1 \mid t}^{\theta, i}(x^i \mid x)\right]
$$

\begin{restatable}{proposition}{elbo}
\label{prop:elbo}
Consider the context-weighted velocity
\[
v_t^i(\cdot,x \mid x_0,x_1)
=
\alpha^i(x)u_t^i(\cdot,x \mid x_0,x_1),
\]
where $\alpha^i(x)\ge 0$ does not depend on the target state. Let the learned marginal velocity be parameterized as
\[
v_t^{\theta,i}(\cdot,x)=\alpha^i(x)u_t^{\theta,i}(\cdot,x).
\]
For the CTMC rate divergence
\[
D(a,b)=a\log\frac{a}{b}-a+b,
\]
applied coordinate-wise over transition rates, we have
\[
D\!\left(v_t^i(\cdot,x \mid x_0,x_1),v_t^{\theta,i}(\cdot,x)\right)
=
\alpha^i(x)
D\!\left(u_t^i(\cdot,x \mid x_0,x_1),u_t^{\theta,i}(\cdot,x)\right).
\]
Thus the corresponding NELBO term is reweighted by $\alpha^i(x)$.
\end{restatable}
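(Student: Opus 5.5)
The plan is to compute both sides directly from the definition of the rate divergence, treating the masked and unmasked cases of coordinate $i$ separately. The identity we want is essentially a homogeneity property: $D(a,b)=a\log\frac{a}{b}-a+b$ is positively homogeneous of degree one jointly in $(a,b)$. For any $c\ge 0$,
\[
D(ca,cb)=ca\log\frac{ca}{cb}-ca+cb=c\,D(a,b),
\]
with the conventions $0\log(0/0)=0$ and $0\log 0=0$ covering $c=0$ or $a=0$.

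First, I will make the coordinate-wise divergence precise. As in the ELBO formula of \citet{shaulflow}, the divergence between two rate vectors at coordinate $i$ is the sum over off-diagonal target values $y^i\neq x^i$ of $D$ applied to the corresponding rates. Equivalently, once the diagonal entries are written via the rate conditions, one can use the expanded expression displayed above.

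Second, I will use the key structural fact: $\alpha^i(x)$ depends only on the current state $x$ and not on the target value $y^i$. It therefore multiplies every entry of both the conditional velocity $v_t^i(\cdot,x\mid x_0,x_1)=\alpha^i(x)u_t^i(\cdot,x\mid x_0,x_1)$ and the model velocity $v_t^{\theta,i}(\cdot,x)=\alpha^i(x)u_t^{\theta,i}(\cdot,x)$ by the same nonnegative scalar. Applying homogeneity termwise and pulling the common factor out of the finite sum over $y^i$ gives
\[
D\bigl(v_t^i,v_t^{\theta,i}\bigr)=\alpha^i(x)\,D\bigl(u_t^i,u_t^{\theta,i}\bigr).
\]
As a check, the expanded form reads $\frac{\dot\kappa_t}{1-\kappa_t}[\dots]$, and the prefactor simply becomes $\alpha^i(x)\frac{\dot\kappa_t}{1-\kappa_t}$, because $u_t^{\theta,i}$ is built from $p_{1\mid t}^{\theta,i}$ with the same rate factor.

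Finally, substituting this into the ELBO gives the NELBO term reweighted by $\alpha^i(X_t)$, which is the stated conclusion.

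The only delicate step is the bookkeeping of degenerate entries. When coordinate $i$ is already unmasked, the conditional rate vanishes and the term reduces to $b$ scaled by $\alpha^i$. Entries with $a=0$ and $\alpha^i=0$ must also follow the $0\log 0$ conventions. I expect these to be handled uniformly by the convention-based homogeneity, so no real obstacle remains.
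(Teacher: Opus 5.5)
Your proposal is correct and takes the same route as the paper's proof. Both arguments rest on the joint positive homogeneity $D(ca,cb)=c\,D(a,b)$ applied to each entry, and both pull $\alpha^i(x)$ out of the sum over target values because it does not depend on the target state. Your extra bookkeeping on the $0\log 0$ conventions and on already-unmasked coordinates, where the conditional rate is $0$, is a harmless refinement that the paper leaves implicit.
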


\begin{proof}
Since $v_t^i=\alpha^i(x)u_t^i$ and $v_t^{\theta,i}=\alpha^i(x)u_t^{\theta,i}$, it is enough to use the positive homogeneity of the CTMC rate divergence. For scalar rates,
\[
D(\alpha a,\alpha b)
=
\alpha a\log\frac{\alpha a}{\alpha b}-\alpha a+\alpha b
=
\alpha\left(a\log\frac{a}{b}-a+b\right)
=
\alpha D(a,b).
\]
Summing over possible target states preserves the same factor, since $\alpha^i(x)$ is independent of the target state. Therefore,
\[
D\!\left(v_t^i(\cdot,x \mid x_0,x_1),v_t^{\theta,i}(\cdot,x)\right)
=
\alpha^i(x)
D\!\left(u_t^i(\cdot,x \mid x_0,x_1),u_t^{\theta,i}(\cdot,x)\right).
\]
\end{proof}

\subsection{Loss Functions Analysis}
\label{sec:loss_func}

The Bregman divergence objective in DFM can be written as
\begin{align*}
    D\left(u^i_t(\cdot, x^i \mid x_0, x_1), u^{\theta, i}_t(\cdot, x)\right) =  \frac{\dot{\kappa}_t}{1-\kappa_t} \left[(\delta(x_1^i, x^i)-1)\log p_{1\mid t}^{\theta,i}(x_1^i \mid x) + \delta(x_1^i, x^i) - p_{1\mid t}^{\theta,i}(x^i \mid x) \right]
\end{align*}

The loss is scaled by the jump coefficient $\dot{\kappa}_t/(1-\kappa_t)$, which depends only on time. For common schedules this coefficient increases as $t$ approaches $1$, giving more weight to later, less-masked states and less weight to heavily masked ones. This captures a coarse notion of prediction difficulty through the amount of global context, but it does not distinguish tokens within the same state. In particular, two masked tokens at the same time step receive the same global scaling even if one has many visible neighbors and the other has little local context. Therefore, Bregman and NELBO-style objectives \citep{sahoo2024simple} still mix token-level prediction problems of different difficulty, motivating an explicit local-context weighting.





\newpage

\section{Experiments}

\subsection{Experimental setup}
\label{sec:app_setup}


\paragraph{Datasets.} We use OpenWebText (OWT) \citep{Gokaslan2019OpenWeb} as our main training dataset to remain comparable with standard text diffusion baselines such as MDLM and BD3LM. OWT contains roughly 8M web documents, making it a suitable large-scale text benchmark. As a smaller alternative from a different domain, we use QM9 \citep{ramakrishnan2014quantum}, which contains about 134k small molecules. We represent QM9 molecules in SMILES format, a linear encoding of molecular graphs, where local token neighborhoods often correspond to meaningful local chemical structure; therefore, locality remains a relevant inductive signal.

\paragraph{Metrics.} For text, we report generative perplexity, computed with GPT-2\citep{radford2019language}, as a proxy for coherence; token entropy, computed with $\log_2$ as in DFM \citep{gat2024discrete}, as a measure of token-level diversity; and MAUVE \citep{pillutla2021mauve} as our main quality--diversity metric. For QM9, we report the number of valid SMILES strings (\textsc{Valid}) and the number of generated molecules that do not appear in the training set (\textsc{Novel}), both measured out of 1024 generated samples. For all metrics except MAUVE, we estimate standard deviations by splitting generated samples into 5 folds. Since MAUVE is a two-sample metric and depends on sample size, we instead estimate its variance using 5 bootstrap resamples. For the data row in \cref{tab:baselines}, we compute MAUVE between two independent subsets of 5000 validation samples. Although the theoretical upper bound is $1.0$ in the infinite-sample limit, we report this finite-sample estimate as a more realistic reference point and a fairer comparison for generated samples of the same size.

\paragraph{Training and inference.} For training models with Discrete Flow Matching, we employ the Transformer model, with the same architecture and setup as in \citet{sahoo2024simple}. On OWT, we train with a total batch size of 512 samples with a fixed length of 1024 for 1 million steps. We use AdamW optimizer \citep{loshchilov2018decoupled}, setting $\beta_1 = 0.9$ and $\beta_2 = 0.999$, with peak learning rate of $3\times 10^{-4}$ and cosine annealing to $3 \times 10^{-6}$, and a warmup of 2.5k steps. The training takes around 4.5 days on 8 NVIDIA H100 GPUs. Training on QM9 took 30 minutes with the same GPU setup.  For testing models trained on OWT, we follow \citet{sahoo2024simple} and generate 5000 samples with nucleus sampling $p = 0.9$. For models trained on QM9, we set $p=1.0$ (no nucleus sampling) and generate 5120 samples in total. See complete setup in \cref{tab:hyperparams}.  

\begin{table}[ht]
    \caption{Training, inference and flow hyperparameters}
    \label{tab:hyperparams}
    \centering
    \begin{tabular}{l|cc}
        \toprule
        Parameter & OWT & QM9 \\
        \midrule
        \# parameters & 169M & 92M \\
        Batch size & 512 & 2048 \\
        Length & 1024 & 32 \\
        Peak LR & $3 \times 10^{-4}$ & $3 \times 10^{-5}$ \\
        Min LR & $3 \times 10^{-6}$ & $3 \times 10^{-7}$ \\
        warmup & 2500 & 1000 \\
        Weight decay & 0.03 & 0.01 \\
        $(\beta_1, \beta_2)$ & (0.9, 0.999) & (0.9, 0.999) \\
        \# training steps & 1000k & 25k \\
        \midrule
        Scheduler $\kappa(t)$ & $t^2$ & $t^2$ \\
        \midrule
        \# gen. samples & 5000 & 5120 \\ 
        Nucleus $p$ & 0.9 & 1.0 \\
        Temperature & 1.0 & 1.0\\
        \bottomrule
    \end{tabular}
\end{table}

\paragraph{Baselines.} 
Our main direct baseline is standard Discrete Flow Matching (DFM) \citep{gat2024discrete}, trained and sampled with the same architecture, data, and evaluation setup as our method. We also compare against autoregressive and semi-autoregressive language modeling baselines, including AR and BD3LM \citep{arriola2025block} with different block lengths $L'$, which provide strong references for generation quality under more structured decoding orders. Finally, we include representative discrete diffusion baselines, SEDD \citep{lou2024discrete} and MDLM \citep{sahoo2024simple}, to compare against non-autoregressive diffusion-style models trained on the same text generation setting.

\subsection{Sweeps}

\paragraph{Inference-time experiments.} For neighbor-weighted and entropy-weighted solvers, we run the sweep on $r \in \{1, 2, 3\}$, $s \in \{0.5, 1.0, 2.0, 3.0, 4.0, 5.0\}$ and $\beta \in \{1.0,2.0,\dots,7.0\}$. 

\begin{table}[ht]
    \caption{Best hyperparameters for Entropy-weighted $(\beta)$ and Neighbor-weighted $(r, s)$ solvers}
    \label{tab:nw_hyperparams}
    \centering
    \begin{tabular}{ll|ccc}
        \toprule
        Source & Dataset & r & s & $\beta$ \\
        \midrule
        \multirow{2}{*}{Uniform} & OWT & 1 & 5.0 & 6.0 \\
        & QM9 & 1 & 4.0 & 7.0 \\
        \midrule
        \multirow{2}{*}{Mask} & OWT & 1 & 5.0 & 6.0 \\
        & QM9 & 1 & 4.0 & 6.0 \\
        \bottomrule
    \end{tabular}
\end{table}

\paragraph{Trained models.} For SCE-trained models, we use only Euler solver, no hyperparameter sweeps. We find that using Neighbor-weighted solver provides no improvement. For CW-DFM ($\tilde{p}_t$), on OWT, we set $r=3$ and run training with $s \in \{0.5, 1.0\}$. We run the sweep on neighbor-weighted solver with the trained model on $r \in \{1, 2, 3\}$ and $s \in \{0.5, 1.0, 2.0\}$.  For training CW-DFM and SCE in QM9, we performed the sweep in $r \in \{1, 2, 3\}$ and $s \in \{0.5, 1.0, 2.0\}$. For CW-DFM, we also run the sweep at the test time to find the optimal neighbor solver parameters.

\subsection{Additional results}
\label{sec:add_results}

\paragraph{Entropy, NLL and Local context.} To illustrate the locality bias, we took a pre-trained DFM baseline model with masked source distribution trained on OpenWebText (OWT)~\citep{Gokaslan2019OpenWeb}. We sample $x_1$ from the validation subset and sample $x_t \sim p_t(x \mid x_0, x_1)$ at various timesteps for a total of 16k samples. For each masked token $i$ in each $x_t$, we measured the entropy of the model prediction $H\left(p_{1 \mid t}^{\theta, i}(\cdot \mid x_t)\right)$ and the negative log-likelihood of the correct token, $\mathcal{L}_i = -\log p_{1 \mid t}^{\theta, i}(x_1^i \mid x_t)$, a.k.a. the error of the prediction. We then evaluated two simple locality statistics: the number of unmasked tokens within a radius $r=8$, and the distance to the nearest unmasked token (see \cref{fig:entropy_nll_dist}). In both cases, stronger local context corresponds to lower entropy and lower NLL: increasing the number of visible neighbors decreases both metrics, while increasing the distance to the nearest visible token increases them. This suggests that several simple locality measures can serve as inexpensive proxies for prediction uncertainty.

\begin{figure}[ht]
    \centering
    \includegraphics[width=\linewidth]{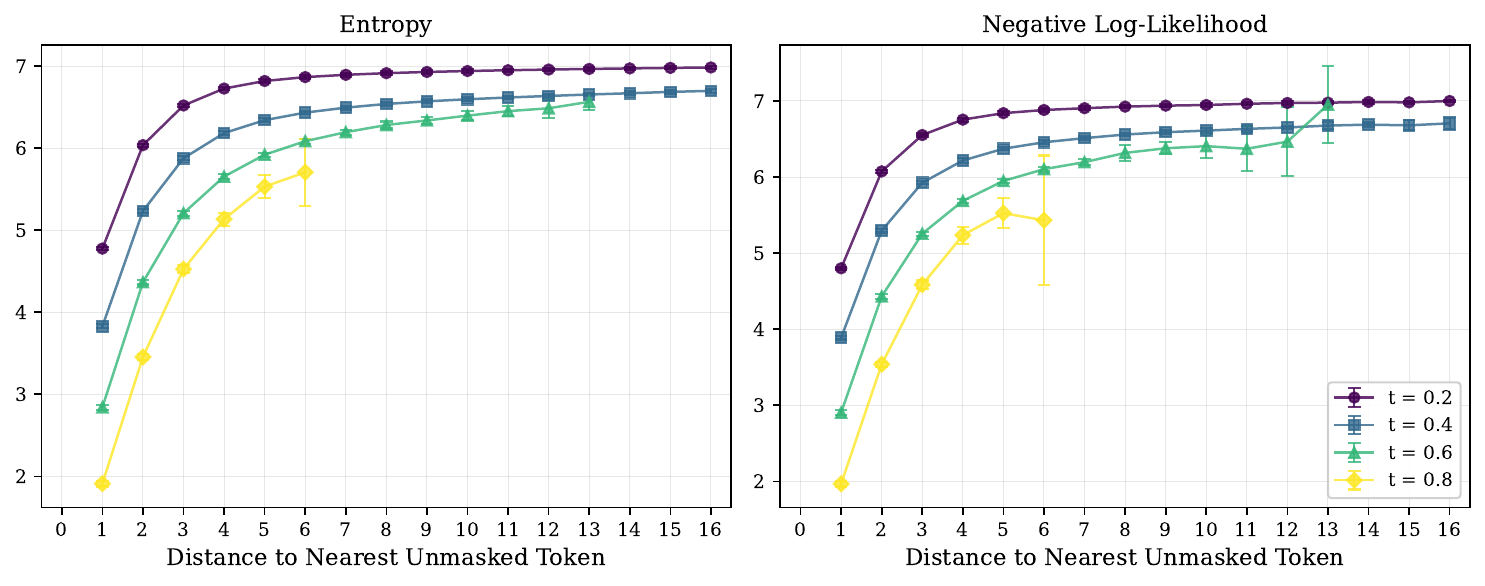}
    \caption{\textbf{The distance to the nearest unmasked token provides another proxy for the token-level uncertainty.} Both the entropy and negative  negative log likelihood increase with the distance, but saturate quite early. Vicinity radius $r=16$, $\kappa(t) = t^2$.}
    \label{fig:entropy_nll_dist}
\end{figure}

\begin{figure}[ht]
    \centering
    \includegraphics[width=\linewidth]{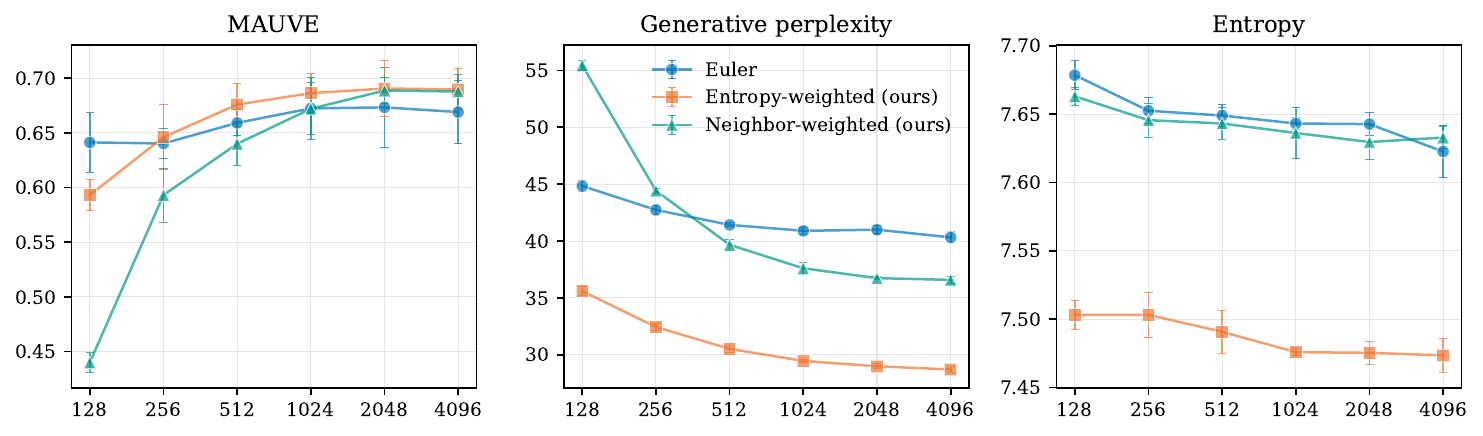}
    \caption{\textbf{Context-weighted sampling remains effective with masked source noise.} With a masked source distribution, context-weighted solvers improve generative perplexity over Euler sampling while maintaining comparable token-level entropy.}
    \label{fig:nfe_vs_quality_mask}
\end{figure}

\begin{figure}
    \centering
    \includegraphics[width=\linewidth]{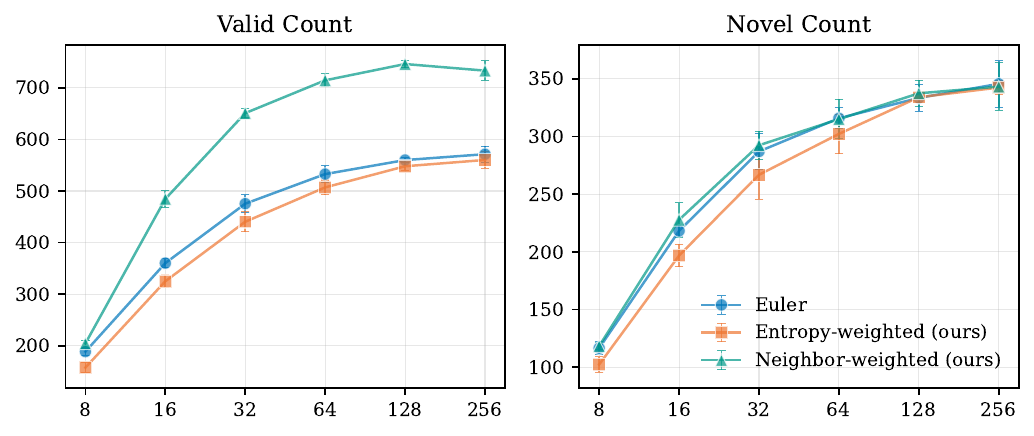}
    \caption{\textbf{Neighbor-weighted sampling improves molecular validity with uniform source noise.} On QM9, Neighbor-weighted sampling generates more valid molecules than the baselines across NFE, while maintaining a comparable number of novel molecules out of 1024 samples.}
    \label{fig:nfe_vs_quality_qm9_uniform}
\end{figure}

\begin{figure}[ht]
    \centering
    \includegraphics[width=\linewidth]{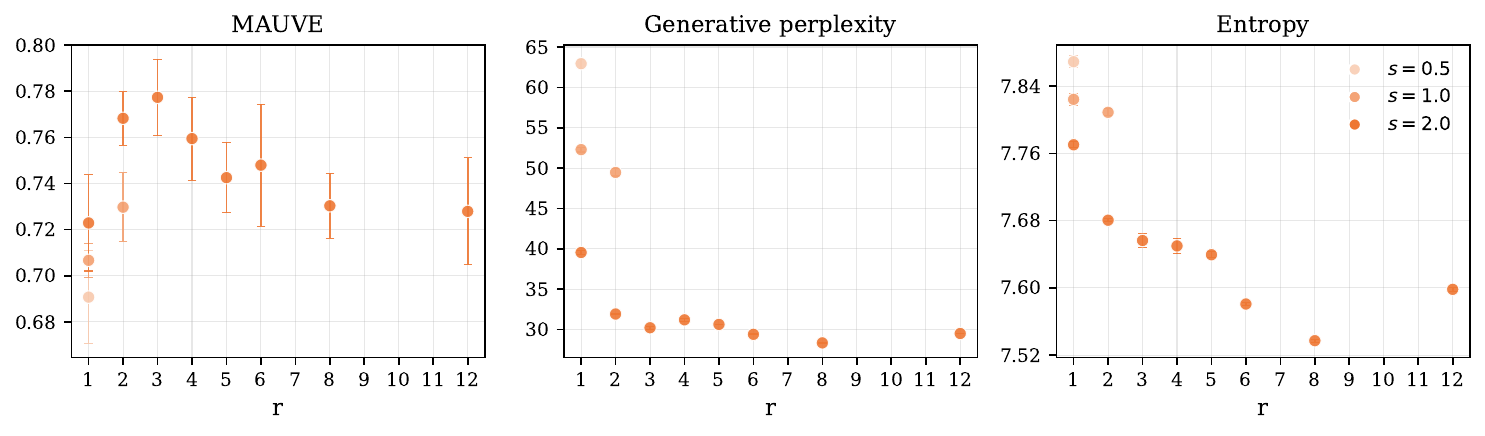}
    \caption{\textbf{Moderate local neighborhoods give the best quality--diversity trade-off.} For DFM trained with SCE on OpenWebText, increasing the neighborhood radius reduces the perplexity but also the entropy; MAUVE peaks at an intermediate radius, suggesting that too little or too much context aggregation is suboptimal.}
    \label{fig:vicinity_size_abl}
\end{figure}

\paragraph{Training dynamics.}
\cref{fig:training_dynamics} shows that SCE follows a smooth training trajectory and leads to substantially lower generative perplexity. The validation objectives are not directly comparable, since CE and SCE use different token weightings. The accompanying moderate entropy decrease suggests that SCE makes generations more coherent without collapsing diversity.


\begin{figure}[ht]
    \centering
    \includegraphics[width=\linewidth]{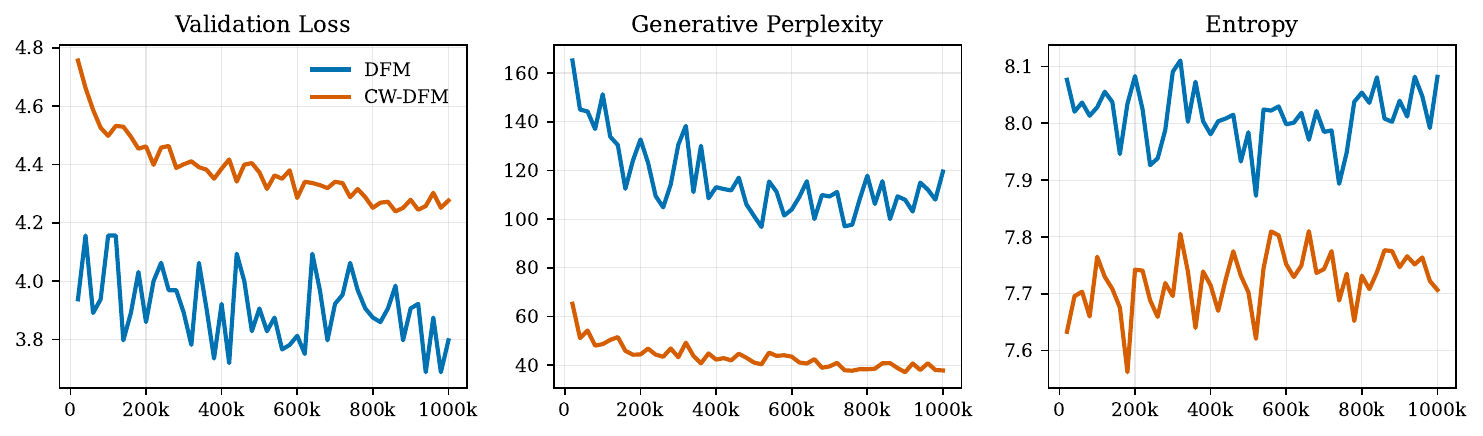}
    \caption{\textbf{SCE changes the optimization trajectory toward lower generative perplexity.}
    Although the weighted SCE validation loss is not directly comparable to standard CE, SCE decreases smoothly during training and yields generations with lower perplexity and a moderate reduction in entropy.}
    \label{fig:training_dynamics}
\end{figure}


\begin{table}[ht] 
    \caption{Test perplexity ($\downarrow$) for a variety of datasets.}
    \label{tab:elbo}
    \centering
    \begin{tabular}{ll|cccc}
        \toprule
        Model & Source & Wikitext103 & LAMBADA & FineWeb-Edu \\
        \midrule
        DFM & Uniform & $\leq$ 63.38 & $\leq$ 84.33 & $\leq$ 48.20 \\
        DFM+SCE & Uniform & $\leq$ 91.53 & $\leq$ 133.25 & $\leq$ 72.90 \\
        \midrule
        DFM & Mask & $\leq$ 45.33 & $\leq$ 57.88 & $\leq$ 35.00 \\
        DFM+SCE & Mask & $\leq$ 45.27 & $\leq$ 59.59 & $\leq$ 34.80 \\
        \bottomrule
    \end{tabular}
\end{table}

\paragraph{Test perplexity.} We report test perplexity on Wikitext103\citep{merity2016pointer}, LAMBADA\citep{paperno2016lambada}, and FineWeb-Edu \citep{lozhkov2024fineweb-edu} datasets in \cref{tab:elbo}. We observe a discrepancy between test perplexity and generation quality: models trained with SCE can have higher test perplexity under the standard evaluation, while producing better samples according to generative perplexity, MAUVE, and qualitative inspection. This is expected because SCE deliberately changes the weighting of the prediction problem rather than optimizing the unweighted likelihood of all coordinates equally. Standard test perplexity evaluates all token predictions uniformly, including highly ambiguous positions with little available context, whereas SCE prioritizes well-contextualized predictions that provide a sharper learning signal. As a result, SCE may be less favorable under the standard perplexity objective, but better aligned with the iterative sampling process, where improving reliable local updates can compound into more coherent generations. A fully aligned likelihood-style evaluation would require accounting for the same context-dependent weighting or path used during training.

\begin{figure}[ht]
    \centering
    \includegraphics[width=0.8\linewidth]{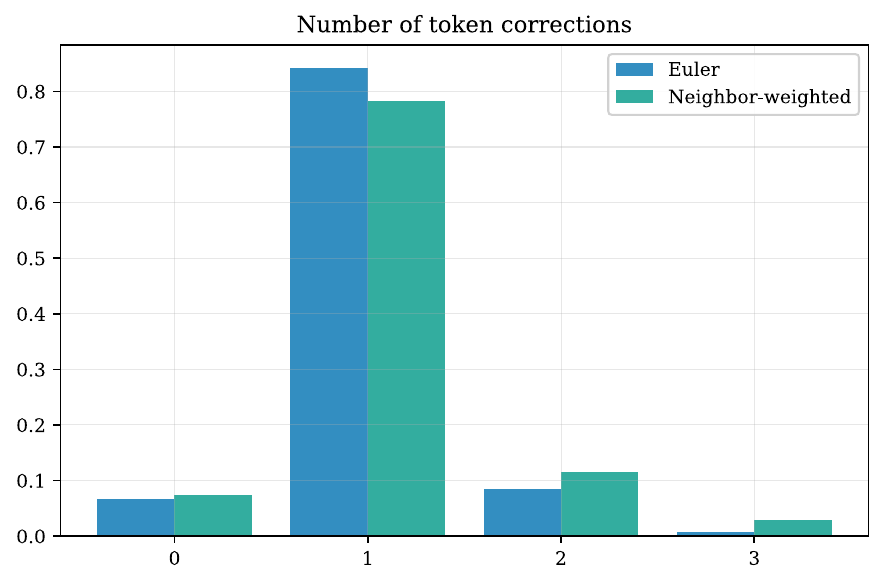}
    \caption{Distribution of the number of per-token updates for Euler and Neighbor-weighted solver, uniform source. The Neighbor-weighted solver has higher number of per-token corrections on average.}
    \label{fig:token_corrections}
\end{figure}

\paragraph{Self-correction.} Naturally, we observe that Neighbor-weighted solver performs more updates per token than the standard Euler sampler (see \cref{fig:token_corrections}). For Euler solver, the average number of per-token corrections was $1.033\std{0.001}$, whereas for Neighbor-weighted solver, we have $1.157\std{0.001}$ corrections (uniform source). There is very little redundancy in these updates, since the proportion of loops for Neighbor-weighted sampling with uniform source is 0.02\%.

\paragraph{Scale ablation.} \cref{tab:abl_cwdfm} ablates the inverse-temperature parameter $s$ used in the context-weighted path $\tilde{p}_t$. Increasing $s$ makes the weighting more selective, which generally lowers generative perplexity but can also reduce entropy. The best value depends on the source distribution: for the uniform source, $s=0.5$ gives slightly higher MAUVE and diversity, while $s=1.0$ yields much lower perplexity; for the masked source, $s=1.0$ provides the best overall trade-off.

\paragraph{Societal impact.}
This work is methodological and aims to improve the efficiency and controllability of discrete generative models. Potential positive impacts include reducing sampling cost and improving the accessibility of generative modeling research. At the same time, stronger generative models can be misused for synthetic text generation at scale, including spam or disinformation. Our work does not introduce a deployed system or use sensitive data, but future applications should consider appropriate monitoring and safeguards.

\begin{table}[ht]
    \caption{Ablation of the scale parameter $s$ for CW-DFM ($\tilde{p}_t$).}
    \label{tab:abl_cwdfm}
    \centering
    \begin{tabular}{ll|ccc}
    \toprule
    Source & $s$ & MAUVE & Gen. PPL & Entropy \\
    \midrule
    \multirow{2}{*}{Uniform} & $0.5$ & 0.779\std{0.017} & 48.91\std{0.29} & 7.83\std{0.01}\\
    & $1.0$ & 0.768\std{0.017} & 37.45\std{0.38} & 7.70\std{0.01} \\
    \midrule
    \multirow{2}{*}{Mask} & $0.5$ & 0.712\std{0.023} & 35.48\std{0.26}& 7.60\std{0.01} \\
    & $1.0$ & 0.751\std{0.012} & 36.25\std{0.52}& 7.65\std{0.01} \\
    \bottomrule
    \end{tabular}
\end{table}

\end{document}